\newcommand{\startpara}[1]{{\vskip1pt\noindent{\bf #1.}}} 
\newcommand{\sectref}[1]{Section~\ref{#1}}
\newcommand{\figref}[1]{Figure~\ref{#1}}
\newcommand{\tabref}[1]{Table~\ref{#1}}
\newcommand{\egref}[1]{Example~\ref{#1}}
\newcommand{\thmref}[1]{Theorem~\ref{#1}}
\newcommand{\lemref}[1]{Lemma~\ref{#1}}
\newcommand{\agref}[1]{Algorithm~\ref{#1}}
\renewcommand{\url}[1]{{\def~{\char126}\sf#1}}
\def\Rset{\mathbb{R}}
\def\cM{{\mathcal{M}}}
\def\Dist{{\mathit{Dist}}}
\def\IPaths{{\mathit{IPaths}}}
\def\FPaths{{\mathit{FPaths}}}
\def\Prms{{\mathit{Pr}_{\cM}^{\sigma}}}
\def\Ems{{\mathit{E}_{\cM}^{\sigma}}}
\def\Emss{{\mathit{E}_{\cM,s}^{\sigma}}}
\def\Emsi{{\mathit{E}_{\cM,s_0}^{\sigma}}}
\def\Emso{{\mathit{E}_{\cM}^{\sigma^*}}}
\DeclareMathOperator*{\minimize}{minimize}
\def\rmdef{\stackrel{\mbox{\rm {\tiny def}}}{=}}
\renewcommand\footnotetextcopyrightpermission[1]{}
\begin{document}

\title[Multi-Objective Controller Synthesis with Uncertain Human Preferences]
{Multi-Objective Controller Synthesis \\ with Uncertain Human Preferences}


\author{Shenghui Chen}
\authornote{Equal contribution. This research was conducted when Shenghui Chen was a student at the University of Virginia.}
\affiliation{University of Texas at Austin \country{USA}}
\email{shenghui.chen@utexas.edu}

\author{Kayla Boggess}
\authornotemark[1]
\affiliation{University of Virginia \country{USA}}
\email{kjb5we@virginia.edu}

\author{David Parker}
\affiliation{University of Birmingham \country{UK}}
\email{d.a.parker@cs.bham.ac.uk}

\author{Lu Feng}
\affiliation{University of Virginia \country{USA}}
\email{lu.feng@virginia.edu}


\begin{abstract}
 
Complex real-world applications of cyber-physical systems give rise to the need for \emph{multi-objective controller synthesis}, which concerns the problem of computing an optimal controller subject to multiple (possibly conflicting) criteria. The relative importance of objectives is often specified by human decision-makers. However, there is inherent uncertainty in human preferences (e.g., due to artifacts resulting from different preference elicitation methods). In this paper, we formalize the notion of \emph{uncertain human preferences}, and present a novel approach that accounts for this uncertainty in the context of multi-objective controller synthesis for Markov decision processes (MDPs). Our approach is based on mixed-integer linear programming and synthesizes an optimally permissive multi-strategy that satisfies uncertain human preferences with respect to a multi-objective property. Experimental results on a range of large case studies show that the proposed approach is feasible and scalable across varying MDP model sizes and uncertainty levels of human preferences. Evaluation via an online user study also demonstrates the quality and benefits of the synthesized controllers.    
\end{abstract}

\keywords{Multi-Objective Controller Synthesis, Markov Decision Processes, Uncertain Human Preferences}

\maketitle
\pagestyle{empty}

\section{Introduction}\label{sec:intro}



Controller synthesis---which offers automated techniques to synthesize controllers that satisfy certain properties--- has been increasingly used in the design of cyber-physical systems (CPS), including applications such as semi-autonomous driving~\cite{seshia2016design}, robotic planning~\cite{kress2018synthesis}, and human-in-the-loop CPS control~\cite{feng2015controller}.
Many complex real-world CPS applications give rise to the need for \emph{multi-objective controller synthesis}, which computes an optimal controller subject to multiple (possibly conflicting) criteria.
Examples are synthesizing an optimal controller to maximize safety while minimizing fuel consumption for an automotive vehicle, or synthesizing an optimal robotic controller to minimize the mission completion time while minimizing the risk in disaster search and rescue.
An optimal solution to multi-objective controller synthesis should account for the trade-off between multiple objective properties.
There may not exist a single global solution that optimizes each individual objective property simultaneously.
Instead, a set of \emph{Pareto optimal} points can be computed:
those for which no objective can be optimized further without worsening some other objectives.

For many applications that involve human decision-makers, they can be presented with these Pareto optimal solutions to decide which one to choose. 
Alternatively, humans can specify \emph{a priori} their preferences about the relative importance of objectives, which are then used as weights in the multi-objective controller synthesis to compute an optimal solution based on the weighted sum of objectives. 
We can ask humans to assign objective weights directly; however, sometimes it can be difficult for them to come up with these values. 
As surveyed in \cite{marler2004survey}, there exist many different approaches for eliciting human preferences, such as ranking, rating, and pairwise comparison.
Various preference elicitation methods can yield different weight values as artifacts.
Moreover, human preferences can evolve over time and vary across multiple users. 
Thus, there is inherent uncertainty in human preferences. 

In this work, we study the problem of multi-objective controller synthesis with uncertain human preferences.
\emph{To the best of our knowledge, this is the first work that takes into account the uncertainty of human preferences in multi-objective controller synthesis.}
We address the following research challenges:
How to mathematically represent the uncertainty in human preferences?
How to account for uncertain human preferences in multi-objective controller synthesis?
How to generate a succinct representation of the synthesis results? And how to evaluate the synthesized controllers?

Specifically, we focus on the modeling formalism of Markov decision processes (MDPs), which have been popularly applied for the controller synthesis of CPS that exhibit stochastic and nondeterministic behavior (e.g., robots~\cite{kress2018synthesis}, human-in-the-loop CPS~\cite{feng2015controller}).
In recent years, theories and algorithms have been developed for the formal verification and controller synthesis of MDPs subject to multi-objective properties~\cite{chatterjee2006markov,etessami2007multi,forejt2011quantitative,forejt2012pareto,HJKQ20,DKQR20}.
However, none of the existing work takes into account the uncertainty in human preferences.

We formalize the notion of \emph{uncertain human preferences} as an interval weight vector that comprises a convex set of weight vectors over objectives.
Since each weight vector corresponds to some controller that optimizes the weighted sum of objectives, an interval weight vector would yield a set of controllers (i.e., MDP strategies). 
We adopt the notion of \emph{multi-strategy}~\cite{DFK+15} to succinctly represent a set of MDP strategies.  
A (deterministic, memoryless) multi-strategy specifies multiple possible actions in each MDP state. Thus, a multi-strategy represents a set of compliant MDP strategies, each of which chooses an action that is allowed by the multi-strategy in each MDP state. 
We define the soundness of a multi-strategy with respect to a multi-objective property, and an interval weight vector representing uncertain human preferences. 
We also quantify the permissivity of a multi-strategy by measuring the degree to which actions are allowed in (reachable) MDP states.
A sound, permissive multi-strategy can enable more flexibility in CPS design and execution. 
For example, if an action in an MDP state becomes infeasible during the system execution (e.g., some robotic action cannot be executed due to an evolving and uncertain environment), then alternative actions allowed by the multi-strategy can be executed instead, still guaranteeing satisfaction of the human preferences.

We develop a mixed-integer linear programming (MILP) based approach to synthesize a sound, optimally permissive multi-strategy with respect to a multi-objective MDP property and uncertain human preferences. 
Our approach is inspired by~\cite{DFK+15}, which presents an MILP-based method for synthesizing permissive strategies in stochastic games (of which MDPs are a special case).
However, there are several key differences in our encodings. 
First, we solve multi-objective optimization problems, while~\cite{DFK+15} is for a single objective.
Second, we have a different soundness definition for the multi-strategy and need to track the values of both lower and upper bounds of each objective, while~\cite{DFK+15} only considers one direction.
Lastly, we have a different definition of permissivity which only considers reachable states under a multi-strategy.


We evaluate the proposed approach on a range of large case studies. 
The experimental results show that our MILP-based approach is scalable to 
synthesize sound, optimally permissive multi-strategies for large models with more than $10^6$ MDP states.
Moreover, the results show that increasing the uncertainty of human preferences yields 
more permissive multi-strategies.

In addition, we evaluate the quality of synthesized controllers via an online user study with 100 participants using Amazon Mechanical Turk.
The study results show that strategies synthesized based on human preferences are more favorable, perceived as more accurate, and lead to better user satisfaction, compared to arbitrary strategies.
In addition, multi-strategies are perceived as more informative and satisfying than less permissive (single) strategies. 

\vspace*{0.5em}
\startpara{Contributions}
We summarize the major contributions of this work as follows.

\begin{itemize}
    \item We formalized the notion of uncertain human preferences, and developed an MILP-based approach to synthesize a sound, optimally permissive multi-strategy for a given multi-objective MDP property and uncertain human preferences.
    \item We implemented the proposed approach and evaluated it on a range of large case studies to demonstrate its feasibility and scalability. 
    \item We designed and conducted an online user study to evaluate the quality and benefits of the synthesized controllers. 
\end{itemize}

\startpara{Paper Organization}
In the rest of the paper, 
we introduce some background about MDPs and multi-objective properties in \sectref{sec:background},
formalize uncertain human preferences in \sectref{sec:preference},
develop the controller synthesis approach in \sectref{sec:approach},
present experimental results in \sectref{sec:exp},
describe the user study in \sectref{sec:study},
survey related work in \sectref{sec:related}, 
and draw conclusions in \sectref{sec:conclusion}.

\section{Background}\label{sec:background}


In this section, we introduce the necessary background about MDPs and multi-objective properties. 

A \emph{Markov decision process (MDP)} is a tuple $\cM=(S, s_0, A, \delta)$, 
where $S$ is a finite set of states, 
$s_0\in S$ is an initial state, 
$A$ is a set of actions,
and $\delta: S\times A \to \Dist(S)$ is a probabilistic transition function
with $\Dist(S)$ denoting the set of probability distributions over $S$.
Each state $s\in S$ has a set of \emph{enabled} actions, given by
$\alpha(s) \rmdef \{a\in A | \delta(s,a)\mbox{ is defined}\}$.
A \emph{path} through $\cM$ is a sequence $\pi=s_0a_0s_1a_1\dots$ 
where $a_i \in \alpha(s_i)$ and $\delta(s_i,a_i)(s_{i+1})>0$ for all $i\ge 0$.
We say that a state $s$ is \emph{reachable} if there exists a finite path starting from $s_0$ and ending in $s$ as the last state. 
Let $\FPaths$ ($\IPaths$) denote the set of finite (infinite) paths through $\cM$.

A \emph{strategy} (also called a policy) is a function $\sigma:\FPaths \to \Dist(A)$ that resolves the nondeterministic choice of actions in each state based on the execution history. 
A strategy $\sigma$ is \emph{deterministic} if $\sigma(\pi)$ is a point distribution for all $\pi$, 
and \emph{randomized} otherwise. 
A strategy $\sigma$ is \emph{memoryless} if the action choice $\sigma(\pi)$ depends only on the last state of $\pi$. 
In this work, we focus on deterministic, memoryless strategies.%
\footnote{For the types of MDP properties considered in this work, there always exists a deterministic, memoryless strategy in the solution set~\cite{puterman1994markov,forejt2011quantitative,forejt2012pareto}.}
Thus, we can simplify the definition of strategy to a function $\sigma: S \to A$. 
Let $\Sigma_\cM$ denote the set of all (deterministic, memoryless) strategies for $\cM$.
A strategy $\sigma \in \Sigma_\cM$ induces a probability measure over $\IPaths$, denoted by $\Prms$, in the standard fashion~\cite{kemeny2012denumerable}.

A \emph{reward function} of $\cM$ takes the form $r: S \times A \to \Rset$. 
The \emph{total reward} along an infinite path $\pi=s_0a_0s_1a_1\dots$ is given by 
$r(\pi) \rmdef \sum_{t=0}^{\infty} r(s_t,a_t)$.
The \emph{expected total reward} for $\cM$ under a strategy $\sigma$ is denoted by
$\Ems(r) \rmdef \int_\pi r(\pi) \ d \Prms$.
We say that $\cM$ under strategy $\sigma$ satisfies a \emph{reward predicate} $[r]_{\sim b}$ where $\sim\,\in \{\ge, \le\}$ is a relational operator and $b$ is a rational reward bound, denoted $\cM, \sigma \models [r]_{\sim b}$, 
if the expected total reward $\Ems(r) \sim b$. 
A reward predicate $[r]_{\sim b}$ is \emph{satisfiable} in MDP $\cM$
if there exists a strategy $\sigma \in \Sigma_\cM$ such that $\cM, \sigma \models [r]_{\sim b}$.
If $b$ is unspecified, we can ask numerical queries, denoted
$[r]_{\min} \rmdef \inf \{x \in \Rset \ |\ [r]_{\le x} \mbox{ is satisfiable}\}$
and $[r]_{\max} \rmdef \sup \{x \in \Rset \ |\ [r]_{\ge x} \mbox{ is satisfiable}\}$.

A \emph{multi-objective property} $\phi=([r_1]_{\bowtie_1}$, $\dots,[r_n]_{\bowtie_n})$, 
where $\bowtie_i\,\in \{\min,\max\}$,
aims to minimize and/or maximize $n$ objectives of expected total rewards simultaneously.
For the rest of the paper, we assume that the multi-objective property 
is of the form $\phi=([r_1]_{\min}, \dots,[r_n]_{\min})$. 
A maximizing objective $[r_i]_{\max}$ can be converted to a minimizing objective by negating rewards. 
Checking $\phi$ on MDP $\cM$ yields a set of Pareto optimal points that lie on the boundary of the set of \emph{achievable values}: 
$$X = \{\vb*{x}=(x_1,\dots,x_n) \in \Rset^n \ |\ ([r_1]_{\le x_1}, \dots,[r_n]_{\le x_n}) \mbox{ is satisfiable}\}.$$
We say that a point $\vb*{x^*}=(x^*_1,\dots,x^*_n) \in X$ is \emph{Pareto optimal} if there does not exist another point $\vb*{x}=(x_1,\dots,x_n) \in X$ such that $x_i \le x^*_i$ for all $i$ and $x_j \neq x^*_j$ for some $j$.
A multi-objective reward predicate $([r_1]_{\le x_1}, \dots,[r_n]_{\le x_n})$ is satisfiable in MDP $\cM$ if there exists a strategy $\sigma \in \Sigma_\cM$ such that $\cM, \sigma \models [r_i]_{\le x_i}$ for all $i$.
The set of achievable values $X$ for $\phi$ is convex~\cite{forejt2012pareto}.

Given a multi-objective property $\phi=([r_1]_{\min}, \dots,[r_n]_{\min})$ 
and a weight vector $\vb*{w} \in \Rset^n$, 
the \emph{expected total weighted reward sum} is 
$\Ems(\vb*{w} \cdot \vb*{r}) \rmdef \sum_{i=1}^n w_i \Ems(r_i)$
for any strategy $\sigma \in \Sigma_\cM$.
We say that a strategy $\sigma^*$ is \emph{optimal} with respect to $\phi$ and $\vb*{w}$, if 
$\Emso(\vb*{w} \cdot \vb*{r}) = \inf \{\Ems(\vb*{w} \cdot \vb*{r}) \ |\ \sigma \in \Sigma_\cM  \}$.
The strategy $\sigma^*$ also corresponds to a Pareto optimal point for $\phi$~\cite{forejt2012pareto}.

For simplicity, in this paper, we make the assumption that an MDP has a set of \emph{end states},
which are reached with probability 1 under any strategy,
and have zero reward and no outgoing transitions to other states.
This simplifies our analysis by ensuring that the expected total reward is always finite.
A variety of useful objectives for real-world applications can be encoded under these restrictions,
for example, minimizing the distance, time, or incurred risk to complete a navigation task for robotic planning, or maximizing the safety and driver trust to complete a trip for autonomous driving.

\begin{figure}[tb]
    \centering
    \includegraphics[width=1\columnwidth]{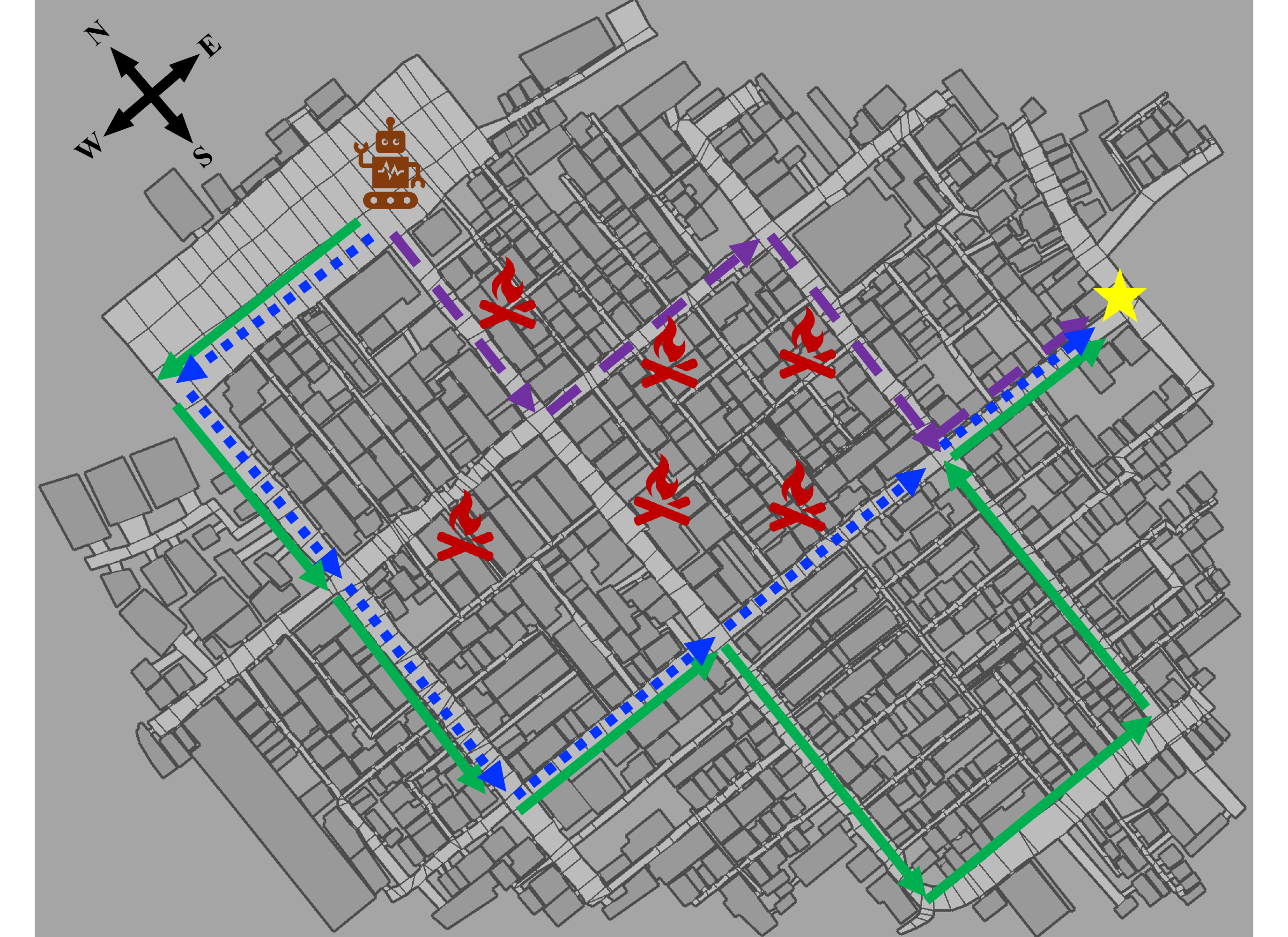}
    \caption{An example map for robotic planning in urban search and rescue missions. The robot aims to navigate to the victim (star) location with the shortest distance while minimizing the risk of bypassing (red) fire zones.}
    \label{fig:map}
\end{figure}

\begin{figure}[tb]
    \centering
    \includegraphics[width=1\columnwidth]{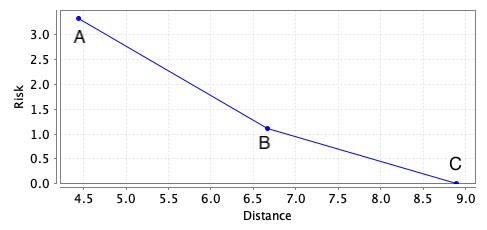}
    \caption{Pareto curve for multi-objective robotic planning. Purple, blue, and green routes in \figref{fig:map} correspond to Pareto optimal points A, B and C, respectively.}
    \label{fig:pareto}
\end{figure}

\begin{example}\label{eg:map}
\figref{fig:map} shows a map for urban search and rescue missions taken from the RoboCup Rescue Simulation Competition~\cite{akin2013robocup}. 
Consider a scenario where the robot aims to find an optimal route satisfying two objectives: 
(1) minimizing the travel distance to reach the rescue location, and (2) minimizing the risk of bypassing fire zones. 
We model the problem as an MDP where each road junction in the map is represented by an MDP state. 
In each state, the robot can move along the road with probability 0.9 and stay put with probability 0.1 due to noisy sensors.
We define two reward functions $\mathsf{dist}$ and $\mathsf{risk}$ to measure the distance (i.e., the number of road blocks navigated) and the risk (i.e., the number of fire zones bypassed), respectively. 
\figref{fig:pareto} shows the Pareto curve for the multi-objective property 
$\phi = ([\mathsf{dist}]_{\min}, [\mathsf{risk}]_{\min})$.
The convex set of achievable values for $\phi$ includes any point on the Pareto curve and in the area above. 
There are three Pareto optimal points (A, B, C) corresponding to three deterministic, memoryless MDP strategies illustrated as purple, blue, and green routes in \figref{fig:map}, respectively. 
The rest of the Pareto curve (e.g., any point on the solid line between A and B, or the solid line between B and C) is achievable only if the robot takes randomized strategies. 
\end{example}

\section{Uncertain Human Preferences}\label{sec:preference}


\subsection{Formalization of Preferences}

Preferences are often represented as weights reflecting humans' opinions about the relative importance of different criteria in multi-objective optimization~\cite{marler2004survey}.
Following this convention, we denote a \emph{preference} over $n$ objectives as a weight vector
$\vb*{w}=(w_1, \dots, w_n) \in \Rset^n$ where $w_i \ge 0$ for $1 \le i \le n$ and $\sum_{i=1}^n w_i=1$.

Such weight vectors can be obtained by eliciting human preferences in different ways.
A naive approach is to ask for direct human input of weight values for objectives; however, it may be difficult for humans to come up with these values in practice. 
A popular preference elicitation method is pairwise comparison~\cite{thurstone1927law}, in which humans answer queries such as: ``Do you prefer objective $i$ or objective $j$?'' for each pair of objectives.
We can then derive weights (e.g., via finding eigenvalues of pairwise comparison matrices) as described in~\cite{barzilai1997deriving,dijkstra2013extraction}.
There are many other methods (e.g., Likert scaling, rating, ranking) for eliciting preferences weights, as surveyed in~\cite{marler2004survey}.
Eliciting preferences from the same person using various methods can yield different weight vectors as artifacts.
In addition, if the controller synthesis needs to account for multiple human decision-makers' opinions, then a range of weight vectors can be resulted from eliciting multiple humans' preferences.

In order to capture the inherent uncertainty of human preferences, 
we define \emph{uncertain human preferences} as an interval weight vector
$\vb*{\tilde{w}}=([\underline{w}_1, \overline{w}_1], \cdots, [\underline{w}_n, \overline{w}_n])$,
where $\underline{w}_i$ ($\overline{w}_i$) is the lower (upper) weight bound for objective $i$, and $0 \le \underline{w}_i \le \overline{w}_i \le 1$.
We say that a weight vector $\vb*{w}$ belongs to 
an interval weight vector $\vb*{\tilde{w}}$, denoted $\vb*{w} \in \vb*{\tilde{w}}$,
if $\underline{w}_i \le w_i \le \overline{w}_i$ for all $i$.
An interval weight vector comprises a convex set of weight vectors, providing a compact representation of uncertain human preferences. 

\begin{figure}[tb]
    \centering
    \includegraphics[width=0.6\columnwidth]{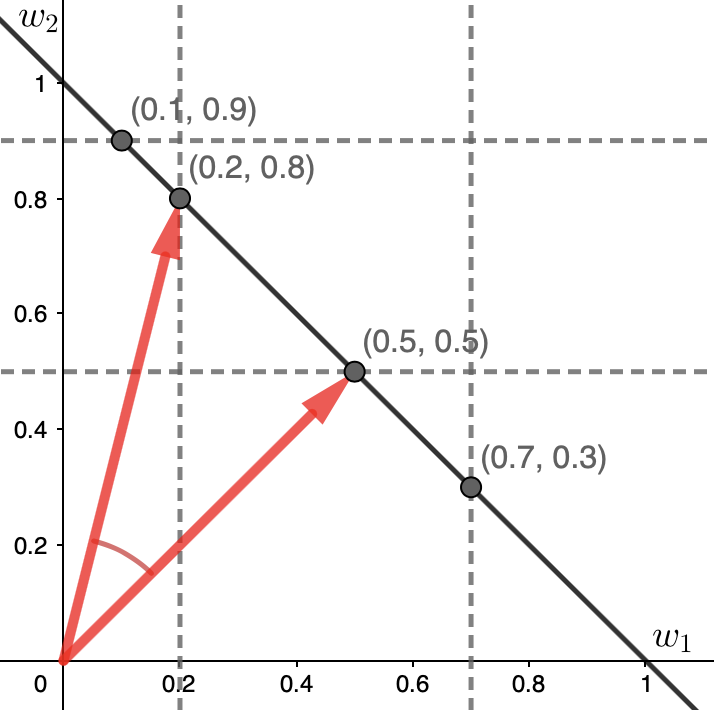}
    \caption{Geometrical interpretation of an interval weight vector
    $\vb*{\tilde{w}}=([0.2, 0.7], [0.5, 0.9])$ representing uncertain human preferences.}
    \label{fig:weights}
\end{figure}

\begin{example}\label{eg:weight}
Suppose $\vb*{\tilde{w}}=([0.2, 0.7], [0.5, 0.9])$.
\figref{fig:weights} shows a geometrical interpretation of uncertain human preferences represented by $\vb*{\tilde{w}}$.
We intersect each dashed line representing the lower or upper objective bounds with the solid line representing $w_1 + w_2 = 1$, and obtain a pair of weight vectors $(0.2, 0.8)$ and $(0.5, 0.5)$ corresponding to the extreme points of the feasible solution set. 
We highlight in red the range of all possible weight vectors that belong to $\vb*{\tilde{w}}$, representing uncertain human preferences. 
\end{example}

\subsection{Multi-Strategy for MDPs}

Recall from \sectref{sec:background} that an optimal MDP strategy $\sigma^*$ with respect to a multi-objective property $\phi$ and a weight vector $\vb*{w}$ corresponds to a Pareto optimal point that optimizes the weighted sum of objectives.
Thus, an interval weight vector $\vb*{\tilde{w}}$ representing uncertain human preferences yields a set of Pareto optimal points and corresponding MDP strategies. 
We will use the notion of \emph{multi-strategy} from \emph{permissive controller synthesis}~\cite{DFK+15} to succinctly represent a set of strategies as follows.

A (deterministic, memoryless) multi-strategy for MDP $\cM$ is 
a function $\theta: S \to 2^A$,
defining a set of \emph{allowed} actions $\theta(s) \subseteq \alpha(s)$ in each state $s \in S$.
Let $\Theta_\cM$ denote the set of all multi-strategies for $\cM$.
We say that a (deterministic, memoryless) strategy $\sigma$ \emph{complies} with multi-strategy $\theta$, denoted $\sigma \lhd \theta$,
if $\sigma(s) \in \theta(s)$ for all states $s\in S$.
We require that $\theta(s)\neq\emptyset$ for any state $s$ that is reachable
under some strategy that complies with $\theta$.


Given a reward predicate $[r]_{\sim b}$,
we say that multi-strategy $\theta$ is \emph{sound} with respect to $[r]_{\sim b}$
if $\cM, \sigma \models [r]_{\sim b}$ for every strategy $\sigma$ that complies with $\theta$.
We then say that a multi-strategy is sound for an uncertain set of human preferences
if it is sound with respect to upper and lower bounds on each
objective induced by a set of weight intervals.
More precisely, given a multi-objective property $\phi=([r_1]_{\min}, \dots,[r_n]_{\min})$ and an interval weight vector $\vb*{\tilde{w}}$,
we say that multi-strategy $\theta$ is sound with respect to $\phi,\vb*{\tilde{w}}$
if it is sound with respect to $[r_i]_{\ge \underline{b}_i}$ and $[r_i]_{\le \overline{b}_i}$ for all $i$,
where $\underline{b}_i = \inf\{x_i | \vb*{x}=(x_1,\dots,x_n) \in X_{\vb*{\tilde{w}}} \}$, $\overline{b}_i = \sup\{x_i | \vb*{x}=(x_1,\dots,x_n) \in X_{\vb*{\tilde{w}}} \}$, and $X_{\vb*{\tilde{w}}}$ denotes the set of Pareto optimal points corresponding to $\vb*{\tilde{w}}$. 
The intuition is that, due to convexity, any weight vector $\vb*{w} \in \vb*{\tilde{w}}$ must 
correspond to a Pareto optimal point within a space bounded by extreme points of $X_{\vb*{\tilde{w}}}$.
Later we develop \agref{alg:bounds} in \sectref{sec:approach} to compute values of $\underline{b}_i$ and $\overline{b}_i$.

We quantify the \emph{permissivity} of multi-strategy $\theta$ by measuring the degree of actions allowed in (reachable) MDP states. 
Let $\lambda(\theta) \rmdef \sum_{s \in S^\theta} (|\alpha(s)|-|\theta(s)|)$ be a \emph{penalty function} 
where $S^\theta \subseteq S$ is the set of reachable states under $\theta$.
We say that a sound multi-strategy $\theta^*$ for $\cM$ is \emph{optimally permissive} if 
$\lambda(\theta^*) = \inf \{\lambda(\theta) \ |\ \theta \in \Theta_\cM$ is sound with respect to 
$\phi \mbox{ and } \vb*{\tilde{w}}\}$. 


\section{Controller Synthesis Approach}\label{sec:approach}

\subsection{Problem Statement}\label{sec:problem}
Given an MDP $\cM=(S, s_0, A, \delta)$, a multi-objective property $\phi=([r_1]_{\min}, \dots,[r_n]_{\min})$, and an interval weight vector $\vb*{\tilde{w}}$ representing uncertain human preferences, how can we synthesize an optimally permissive multi-strategy $\theta \in \Theta_\cM$ that is sound with respect to $\phi$ and $\vb*{\tilde{w}}$?

\subsection{MILP-based Solution}
We present a mixed-integer linear programming (MILP) based approach to solve the above problem.
We use binary variables $\eta_{s,a} \in \{0,1\}$ to encode whether a multi-strategy $\theta$ allows action $a \in \alpha(s)$ in state $s \in S$ of MDP $\cM$.
We use real-valued variables $\mu_{i,s}$ and $\nu_{i,s}$ to represent the minimal and maximal expected total reward for the $i$th objective from state $s$, under any strategy complying with $\theta$. 
We set $\mu_{i,s}=\nu_{i,s}=0$ for any end states in the MDP.
The MILP encoding is:

\begin{subequations}
\begin{align}
\minimize_{\eta_{s,a} \in \{0,1\}, \mu_{i,s} \in \Rset, \nu_{i,s} \in \Rset} 
c \cdot\sum_{s\in S} \sum_{a\in \alpha(s)} (1-\eta_{s,a}) \nonumber \\
+ \sum_{i=1}^n (\nu_{i,s_0}-\mu_{i,s_0})
\label{eq:obj} \\
\intertext{\hspace{60pt} subject to}
 \forall s\in S: \sum_{a\in \alpha(s)} \eta_{s,a} \le c \cdot \sum_{(t,a) \in \rho(s)} \eta_{t,a} 
 \label{eq:c1} \\
 \forall s\in S: c \cdot \sum_{a\in \alpha(s)} \eta_{s,a} \ge \sum_{(t,a) \in \rho(s)} \eta_{t,a} 
 \label{eq:c2} \\
\forall 1\leq i\leq n,\forall s \in S, \forall a\in \alpha(s): \nonumber \\
 \mu_{i,s} \le \sum_{t\in S} \delta(s,a)(t) \cdot \mu_{i,t} + r_i(s,a) + c \cdot (1-\eta_{s,a})
 \label{eq:c3} \\
\forall 1\leq i\leq n, \forall s \in S, \forall a\in \alpha(s): \nonumber \\
 \nu_{i,s} \ge \sum_{t\in S} \delta(s,a)(t) \cdot \nu_{i,t} + r_i(s,a) - c \cdot (1-\eta_{s,a})
 \label{eq:c4} \\
 \forall 1\leq i\leq n: \mu_{i,s_0}  \ge \underline{b}_i
 \label{eq:c5} \\
 \forall 1\leq i\leq n: \nu_{i,s_0}  \le \overline{b}_i
 \label{eq:c6} 
\end{align}
\end{subequations}
where $c$ is a large scaling constant%
\footnote{Constant $c$ is chosen to be larger than the expected total reward for any objective, from any state and under any objective.}
and we let $\rho(s) \rmdef \{(t,a) \ |\ $
$\delta(t,a)(s)>0 \mbox{ and } t\neq s\}$
denote the set of incoming transitions to a state $s \in S$. 

The objective function (\ref{eq:obj}) minimizes the total number of disallowed actions in all states
plus the sum of expected total rewards over all objectives in the initial state. 
The latter serves as a tie-breaker between solutions with the same permissivity, 
favoring tighter reward bounds. 

Constraints (\ref{eq:c1}) and (\ref{eq:c2}) enforce that no action is allowed for $s$ if it is unreachable from any other state under the multi-strategy, and at least one action should be allowed otherwise.
For the initial state $s_0$, we assume that there is always an allowed incoming transition, 
and $\sum_{(t,a) \in \rho(s_0)} \eta_{t,a}=1$.
Constraints (\ref{eq:c3}) and (\ref{eq:c4}) encode the recursion for expected rewards in each step,
which are trivially satisfied when $\eta_{s,a}=0$, that is, action $a$ is disallowed in state $s$. 
Constraints (\ref{eq:c5}) and (\ref{eq:c6}) guarantee that, for each objective $i$,
the expected total reward in the initial state under the multi-strategy
satisfies the lower and upper bounds $\underline{b}_i$ and $\overline{b}_i$, 
which are precomputed using \agref{alg:bounds}.

\begin{algorithm}[t] 
\caption{Precomputing objective bounds} \label{alg:bounds} 
\algsetup{linenosize=\tiny}
\begin{algorithmic} [1] 
    \REQUIRE An MDP $\cM$, a multi-objective property $\phi$, and an interval weight vector $\vb*{\tilde{w}}$ for uncertain human preferences
    \ENSURE An interval vector $\vb*{b}=([\underline{b}_1, \overline{b}_1], \dots, [\underline{b}_n, \overline{b}_n])$ for expected total reward bounds over $n$ objectives
	\STATE Initialize $\underline{b}_i=\infty$ and $\overline{b}_i=-\infty$ for $1 \le i \le n$
	\STATE $W \leftarrow$ Find the set of extreme points of $\vb*{\tilde{w}}$ 
	\FORALL{weight vector $\vb*{w} \in W$ } 
	    \STATE $\vb*{x}=(x_1,\dots,x_n) \leftarrow$ 
	        Find a Pareto optimal point for $\phi$ that corresponds to $\vb*{w}$
	    \FOR{$1 \le i \le n$}
	        \IF{$\underline{b}_i \ge x_i$}
	            \STATE $\underline{b}_i = x_i$
	        \ENDIF
	        \IF{$\overline{b}_i \le x_i$}
	            \STATE $\overline{b}_i = x_i$
	        \ENDIF
	    \ENDFOR
	\ENDFOR
    \RETURN $\vb*{b}$
\end{algorithmic}
\end{algorithm}

Given an interval weight vector $\vb*{\tilde{w}}$ representing uncertain human preferences, 
\agref{alg:bounds} (line 2) first finds the set of extreme points in $\vb*{\tilde{w}}$, denoted $W$.
This can be done by applying standard methods for finding extreme points in a convex set~\cite{treves2016topological}.
Next, for each weight vector $\vb*{w} \in W$, \agref{alg:bounds} (line 4) finds a 
Pareto optimal point $\vb*{x}=(x_1,\dots,x_n)$ for $\phi$,
which yields the minimal expected total weighted reward sum under any strategy of the MDP $\cM$. 
Here, we apply the value iteration-based method in~\cite{forejt2012pareto} 
for the computation of Pareto optimal points. 
Finally, \agref{alg:bounds} (line 3-13) loops through all weight vectors in $W$ 
to determine the smallest lower bound $\underline{b}_i$ 
and the greatest upper bound $\overline{b}_i$ of the expected total reward for each objective $i$.

\begin{example}\label{eg:milp}
We apply the proposed approach to synthesize an optimally permissive multi-strategy for MDP $\cM$ modeled in \egref{eg:map}
that is sound with respect to $\phi = ([\mathsf{dist}]_{\min}, [\mathsf{risk}]_{\min})$ 
and $\vb*{\tilde{w}}=([0.2, 0.7], [0.5, 0.9])$.
Following \egref{eg:weight}, $\vb*{\tilde{w}}$ gives a convex set of weight vectors
with two extreme points $(0.5, 0.5)$ and $(0.2, 0.8)$.
We also find out that weight vectors $(0.5, 0.5)$ and $(0.2, 0.8)$ correspond to Pareto optimal points B and C in \figref{fig:pareto}, respectively.
Thus, applying \agref{alg:bounds} yields an interval vector 
$\vb*{b}=([6.66, 8.89], [0, 1.12])$ for the expected total reward bounds over $\phi$. 

The MILP encoding minimizes
$c \cdot\sum_{s\in S} \sum_{a\in \alpha(s)} (1-\eta_{s,a}) \nonumber 
+ \sum_{i=1}^2 (\nu_{i,s_0}-\mu_{i,s_0})$.
We can select $c=1000$ as the scaling factor constant in this example.

Constraints (\ref{eq:c1}) and (\ref{eq:c2}) are instantiated, for example, for the initial state $s_0$ as:
$$ \eta_{s_0,\mathsf{south}} + \eta_{s_0,\mathsf{west}} \le c $$
$$ c \cdot (\eta_{s_0,\mathsf{south}} + \eta_{s_0,\mathsf{west}}) \ge 1 $$

Constraints (\ref{eq:c3}) and (\ref{eq:c4}) are instantiated, for example, 
for the first objective $[\mathsf{dist}]_{\min}$, state $s_0$, and action $\mathsf{west}$ as:
$$ \mu_{1,s_0} \le {0.9 \cdot \mu_{1,s_1}} + {0.1 \cdot \mu_{1,s_0}} + 1 + c \cdot (1-\eta_{s_0,\mathsf{west}}) $$
$$ \nu_{1,s_0} \ge {0.9 \cdot \nu_{1,s_1}} + {0.1 \cdot \nu_{1,s_0}} + 1 - c \cdot (1-\eta_{s_0,\mathsf{west}}) $$

Constraints (\ref{eq:c5}) and (\ref{eq:c6}) are instantiated, for example, 
for the first objective $[\mathsf{dist}]_{\min}$ as:
$\mu_{1,s_0}  \ge 6.66$ and $\nu_{1,s_0}  \le 8.89$.

The MILP encoding uses 15 binary variables to encode $\eta_{s,a}$, 
44 real-valued variables to encode $\mu_{i,s}$ and $\nu_{i,s}$,
and a total number of 90 constraints. 
It takes less than 1 second to solve the MILP problem using the Gurobi optimization toolbox~\cite{gurobi}.
The solution yields a multi-strategy as illustrated by the orange lines in \figref{fig:multi}.
The synthesized multi-strategy is sound with respect to $\phi$ and $\vb*{\tilde{w}}$. 
There are two strategies complying with the multi-strategy, 
corresponding to Pareto optimal points B and C in \figref{fig:pareto}.
The multi-strategy is also optimally permissive. 
Such a permissive multi-strategy could be useful in assisting humans' decision-making, 
by informing them about multiple allowable action choices in states. 
In addition, it offers flexibility for the system execution. If the robot finds that certain action cannot be executed due to the evolving environment (e.g., fire spreading), it may execute an alternative actions allowed by the multi-strategy while still guaranteeing soundness.

\end{example}

\begin{figure}[tb]
    \centering
    \includegraphics[width=1\columnwidth]{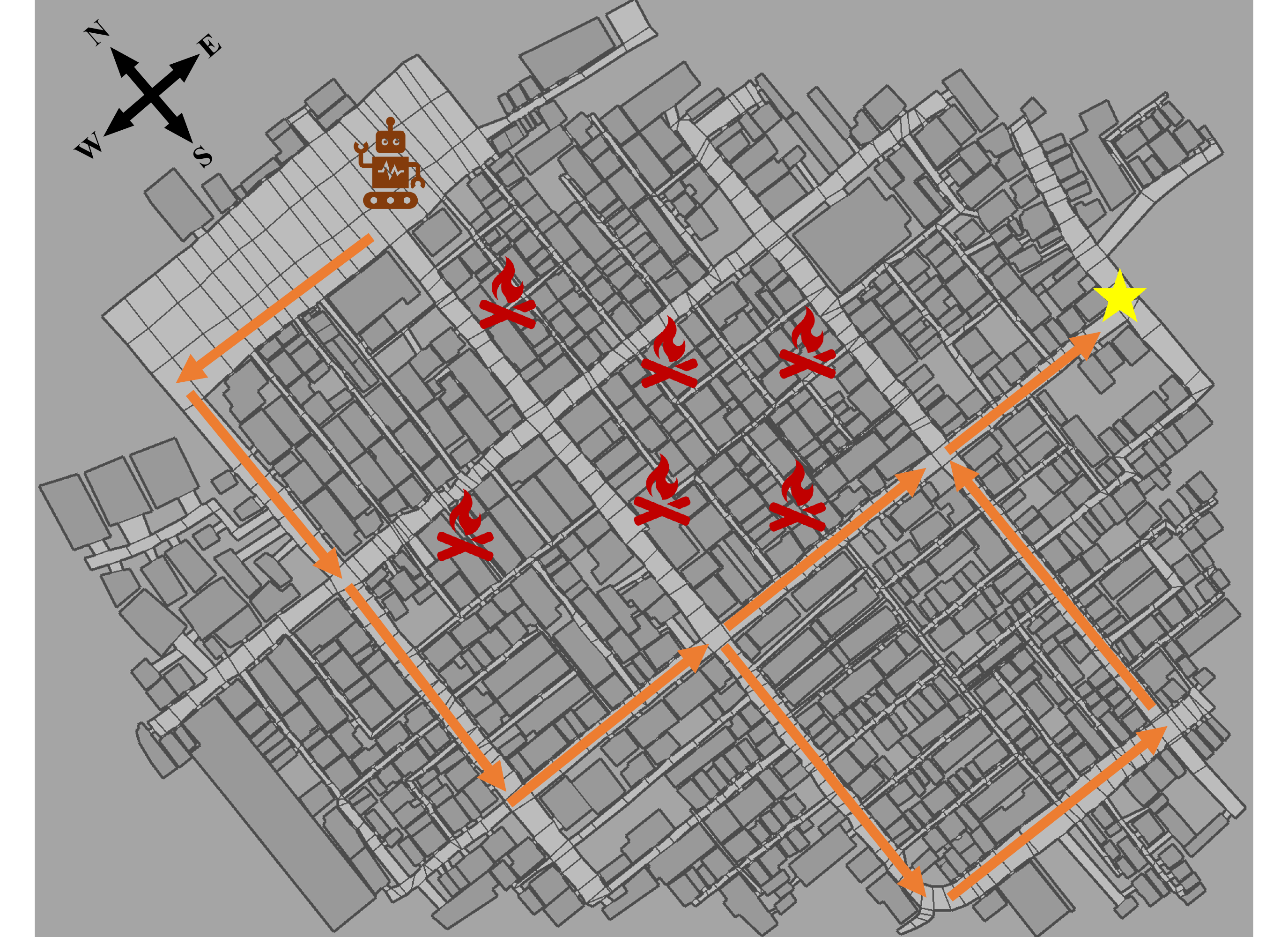}
    \caption{The synthesized multi-strategy in \egref{eg:milp}.}
    \label{fig:multi}
\end{figure}

\subsection{Correctness}
The correctness of our proposed approach, with respect to the problem statement in \sectref{sec:problem},
is stated below and the proof is given in the appendix. 
\begin{restatable}{thm}{correctness}
\label{thm:milp}
Let $\cM$ be an MDP, $\phi=([r_1]_{\min}, \dots,[r_n]_{\min})$ be a multi-objective property
and $\vb*{\tilde{w}}$ be an interval weight vector representing uncertain human preferences.
There is a sound, optimally permissive multi-strategy $\theta$ in $\cM$ with respect to $\phi$ and $\vb*{\tilde{w}}$ 
whose permissive penalty is $\lambda(\theta)$, 
if and only if there is an optimal assignment to the MILP instance from (\ref{eq:obj})-(\ref{eq:c6})
which satisfies $\lambda(\theta)=\sum_{s\in S} \sum_{a\in \alpha(s)} (1-\eta_{s,a})$.
\end{restatable}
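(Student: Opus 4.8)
The plan is to prove the biconditional by exhibiting an explicit correspondence between multi-strategies and feasible integer assignments, and then checking that it preserves soundness and permissivity in both directions. The bridge is the identification $\eta_{s,a}=1 \iff a\in\theta(s)$, under which $\sum_{a\in\alpha(s)}(1-\eta_{s,a})=|\alpha(s)|-|\theta(s)|$ in every state. I would first treat the continuous variables for a \emph{fixed} $\theta$ (fixed $\eta$) and show they are pinned to the intended values at any optimum. By the choice of the big constant $c$, constraint (\ref{eq:c3}) is vacuous whenever $\eta_{s,a}=0$, while for an allowed action it reads $\mu_{i,s}\le r_i(s,a)+\sum_{t}\delta(s,a)(t)\,\mu_{i,t}$; hence every feasible $\mu$ is a sub-fixpoint of the Bellman operator restricted to $\theta$, i.e. $\mu_{i,s}\le\min_{a\in\theta(s)}[\,r_i(s,a)+\sum_t\delta(s,a)(t)\,\mu_{i,t}\,]$. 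Because the objective (\ref{eq:obj}) contains the term $-\mu_{i,s_0}$, an optimal solution drives $\mu_{i,s_0}$ up to the greatest such sub-fixpoint, which---using the end-state assumption that guarantees a unique finite fixpoint---equals $\min_{\sigma\lhd\theta}\Emsi(r_i)$. A symmetric argument on (\ref{eq:c4}) and the term $+\nu_{i,s_0}$ yields $\nu_{i,s_0}=\max_{\sigma\lhd\theta}\Emsi(r_i)$ at optimum.

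With the continuous variables identified, constraints (\ref{eq:c5}) and (\ref{eq:c6}) read exactly as $\min_{\sigma\lhd\theta}\Emsi(r_i)\ge\underline{b}_i$ and $\max_{\sigma\lhd\theta}\Emsi(r_i)\le\overline{b}_i$, which is precisely soundness of $\theta$ with respect to $[r_i]_{\ge\underline{b}_i}$ and $[r_i]_{\le\overline{b}_i}$ for all $i$; by the definition of $\underline{b}_i,\overline{b}_i$ computed in \agref{alg:bounds}, this is soundness with respect to $\phi$ and $\vb*{\tilde{w}}$. Next I would verify that (\ref{eq:c1})--(\ref{eq:c2}) correctly encode reachability: reading them inductively from $s_0$ (where the assumption $\sum_{(t,a)\in\rho(s_0)}\eta_{t,a}=1$ seeds the induction), a state admits an allowed action iff it has an allowed incoming transition, so $\{s:\sum_a\eta_{s,a}>0\}$ is exactly $S^\theta$ and every reachable state is nonempty, as required of a multi-strategy. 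The two directions then close: from a sound, optimally permissive $\theta$ I set $\eta,\mu,\nu$ as above to obtain a feasible point whose objective equals the claimed value and which must be optimal; conversely, from an optimal assignment I read off $\theta(s)=\{a:\eta_{s,a}=1\}$, which is sound by the above and whose penalty matches the optimal objective's first term, hence is optimally permissive.

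I expect the main obstacle to lie in reconciling the objective's \emph{all-states} sum $\sum_{s\in S}\sum_{a}(1-\eta_{s,a})$ with the penalty $\lambda(\theta)=\sum_{s\in S^\theta}(|\alpha(s)|-|\theta(s)|)$, which ranges only over \emph{reachable} states. An unreachable state $s$ is forced by (\ref{eq:c1}) to have $\eta_{s,a}=0$ for all $a$ and therefore contributes its full $|\alpha(s)|$ to the MILP objective yet nothing to $\lambda$. The crux is thus to show that, at an optimal assignment, no penalty is ``wasted'' on unreachable states---equivalently, that the states carrying disallowed-action penalty are exactly the reachable ones---so that the two sums coincide and the equation $\lambda(\theta)=\sum_{s\in S}\sum_{a\in\alpha(s)}(1-\eta_{s,a})$ holds. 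I would approach this by leveraging the fact that the leading coefficient $c$ makes the permissivity term lexicographically dominant, together with the reachability encoding, and where necessary by restricting attention to states reachable in the underlying MDP. A secondary technical point is confirming that the greatest/least sub-/super-fixpoint for $\mu,\nu$ is genuinely attained rather than merely bounded, for which the end-state assumption is again essential.
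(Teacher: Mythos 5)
Your proposal follows essentially the same route as the paper's proof: the dictionary $\eta_{s,a}=1\iff a\in\theta(s)$, the Bellman-inequality characterization of $\mu_{i,s}$ and $\nu_{i,s}$ as $\inf_{\sigma\lhd\theta}\Emss(r_i)$ and $\sup_{\sigma\lhd\theta}\Emss(r_i)$ (which the paper isolates as \lemref{lem:mdp}, citing standard MDP results and the end-state assumption, where you instead argue via greatest sub-fixpoints driven by the $\nu_{i,s_0}-\mu_{i,s_0}$ term of (\ref{eq:obj})), reachability via (\ref{eq:c1})--(\ref{eq:c2}), soundness read off from (\ref{eq:c5})--(\ref{eq:c6}), and the same two-direction closing argument. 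The one step you explicitly leave open --- reconciling the all-states sum $\sum_{s\in S}\sum_{a\in\alpha(s)}(1-\eta_{s,a})$ with the reachable-states penalty $\lambda(\theta)=\sum_{s\in S^\theta}(|\alpha(s)|-|\theta(s)|)$ --- is precisely the point the paper's own proof asserts without further argument (it sets $\hat{\eta}_{s,a}=0$ at states unreachable under $\theta$ and declares the two sums equal), so you have correctly located the only delicate bookkeeping in the argument rather than missed an idea present in the paper.
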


\begin{table*}[tb]
\tiny
\centering
\resizebox{\textwidth}{!}{%
\begin{tabular}{@{}cccccccccc@{}}
\toprule
\multicolumn{3}{c}{Case Study} & \multicolumn{2}{c}{MDP Size} & \multicolumn{3}{c}{MILP Size} & \multicolumn{2}{c}{MILP Solution} \\ \cmidrule(lr){1-3} \cmidrule(lr){4-5} \cmidrule(lr){6-8} \cmidrule(lr){9-10} 
Name & Parameters & Preferences & \#States & \#Trans & \#Binary & \#Real & \#Constraints & Time (s) & \#Permissive States \\ \midrule
\multirow{8}{*}{uav} 
 & \multirow{2}{*}{5} & ([0.1, 0.2], [0.8, 0.9]) & \multirow{2}{*}{28,401} & \multirow{2}{*}{40,373} & \multirow{2}{*}{29,897} & \multirow{2}{*}{113,604} & \multirow{2}{*}{176,394} & 2.2 & 1 \\ 
 &  & ([0.1, 1], [0, 0.9]) &  &  &  &  &  & 7.8 & 1,496 \\ \cmidrule(l){2-10} 
 & \multirow{2}{*}{10} & ([0.1, 0.2], [0.8, 0.9]) & \multirow{2}{*}{56,901} & \multirow{2}{*}{80,873} & \multirow{2}{*}{59,897} & \multirow{2}{*}{227,604} & \multirow{2}{*}{353,394} & 5.4 & 1 \\ 
 &  & ([0.1, 1], [0, 0.9]) &  &  &  &  &  & 20.2 & 2,996 \\ \cmidrule(l){2-10} 
 & \multirow{2}{*}{20} & ([0.1, 0.2], [0.8, 0.9]) & \multirow{2}{*}{113,901} & \multirow{2}{*}{161,873} & \multirow{2}{*}{119,897} & \multirow{2}{*}{455,604} & \multirow{2}{*}{707,394} & 15.3 & 1 \\ 
 &  & ([0.1, 1], [0, 0.9]) &  &  &  &  &  & 43.8 & 5,996 \\ \midrule
\multirow{8}{*}{taskgraph} 
 & \multirow{2}{*}{30} & ([0.8, 1], [0, 0.2]) & \multirow{2}{*}{21,046} & \multirow{2}{*}{43,257} & \multirow{2}{*}{29,813} & \multirow{2}{*}{84,184} & \multirow{2}{*}{161,348} & 25.1 & 1,770 \\ 
 &  & ([0.1, 1], [0, 0.9]) &  &  &  &  &  & 4.6 & 8,765 \\ \cmidrule(l){2-10} 
 & \multirow{2}{*}{40} & ([0.8, 1], [0, 0.2]) & \multirow{2}{*}{36,866} & \multirow{2}{*}{75,677} & \multirow{2}{*}{52,153} & \multirow{2}{*}{147,464} & \multirow{2}{*}{282,348} & 54.1 & 4,387 \\ 
 &  & ([0.1, 1], [0, 0.9]) &  &  &  &  &  & 11.7 & 15,285 \\ \cmidrule(l){2-10} 
 & \multirow{2}{*}{50} & ([0.8, 1], [0, 0.2]) & \multirow{2}{*}{57,086} & \multirow{2}{*}{117,097} & \multirow{2}{*}{80,693} & \multirow{2}{*}{228,344} & \multirow{2}{*}{436,948} & 132.5 & 7,939 \\ 
 &  & ([0.1, 1], [0, 0.9]) &  &  &  &  &  & 14.6 & 23,605 \\ \midrule
\multirow{6}{*}{teamform} 
 & \multirow{2}{*}{2} & ([0.8, 1], [0, 0.2]) & \multirow{2}{*}{1,847} & \multirow{2}{*}{2,288} & \multirow{2}{*}{2,191} & \multirow{2}{*}{7,388} & \multirow{2}{*}{12,462} & 1.9 & 146 \\
 &  & ([0, 0.9], [0.1, 1]) &  &  &  &  &  & 2.3 & 189 \\ \cmidrule(l){2-10} 
 & \multirow{2}{*}{3} & ([0.8, 1], [0, 0.2]) & \multirow{2}{*}{12,475} & \multirow{2}{*}{15,228} & \multirow{2}{*}{14,935} & \multirow{2}{*}{49,900} & \multirow{2}{*}{84,694} & timeout & - \\ 
 &  & ([0, 0.9], [0.1, 1]) &  &  &  &  &  & timeout & - \\  \bottomrule
\end{tabular}%
}
\vspace{1pt}
\caption{Experimental results illustrating performance of the proposed approach}
\label{tab:results}
\end{table*}

\subsection{Complexity Analysis}
The size of an MILP problem is measured by the number of decision variables and the number of constraints.
In the proposed MILP encoding, the number of binary variables is bounded by $\mathcal{O}(|S|\cdot |A|)$, 
the number of real-valued variables is bounded by $\mathcal{O}(n \cdot |S|)$,
and the number of constraints is bounded by $\mathcal{O}(n \cdot |S| \cdot |A|)$.
MILP solvers work incrementally to synthesize a series of sound multi-strategies that are increasingly permissive.
Therefore, we may stop early to accept a sound (but not necessarily optimally permissive) multi-strategy if computational resources are limited. 

Prior to the MILP solution, we need to execute \agref{alg:bounds},
the most costly step of which is the computation of a Pareto optimal point in line 4. This is performed $|W|$ times,
where $|W|$ is exponential in the number of objectives $n$.
For each point, we compute a minimal weighted sum of expected total rewards for a given weight vector.
This is done using the value iteration-based method of~\cite{forejt2012pareto}. Value iteration does not have
a meaningful time complexity, but
is faster and more scalable than linear programming-based techniques in practice.


\section{Experimental Results}\label{sec:exp}

We have built a prototype implementation of the proposed approach, which uses the PRISM model checker~\cite{kwiatkowska2011prism} for computing Pareto optimal results of multi-objective synthesis in MDPs, 
and the Gurobi optimization toolbox~\cite{gurobi} for solving MILP problems. 
The experiments were run on a laptop with a 2.8 GHz Quad-Core Intel Core i7 CPU and 16 GB RAM.

\subsection{Case Studies} 
We applied our approach to three large case studies.
\footnote{Files are available from:
\href{https://www.prismmodelchecker.org/files/iccps22}{\url{https://www.prismmodelchecker.org/files/iccps22}}}. 
For each case study, we used two interval weight vectors representing preferences with different uncertainty levels.

The first case study is adapted from~\cite{feng2015controller}, which considers the control of an unmanned aerial vehicle (UAV) that interacts with a human operator for road network surveillance, with a varying model parameter to count the operator's workload and fatigue level that may lead to degraded mission performance. The controller synthesis aims to balance two objectives of mission completion time and risk, based on the specified uncertain human preferences. 

The second case study considers a task-graph scheduling problem inspired by~\cite{NPS13}. The controller synthesis aims to compute an optimal schedule for a set of dependent tasks based on human preferences of different processors, with a varying model parameter of the digital clock counter. 

The third case study models a team formation protocol~\cite{chen2011verifying} where a varying number of sensing agents cooperate to achieve certain tasks. The controller synthesis seeks to find an optimal schedule for these agents to meet the objectives of completing different tasks based on human preferences. 

\subsection{Results Analysis}
\tabref{tab:results} shows experimental results for these case studies. 
For each case study, we report the size of the MDP models in terms of the number of states and transitions, 
the size of the resulting MILP problems in terms of the number of decision variables (binary and real-valued) and constraints, the runtime for solving the MILP, 
and the number of permissive states (i.e., those with more than one allowed actions) in the synthesized controllers. 
We set a time-out of one hour for solving the MILP. 

Unsurprisingly, the size of MILP problems increases with larger MDP models. 
But the results demonstrate that our approach can scale to large case studies. 
For example, it takes less than one minute to solve the resulting MILP problem of ``uav 20'' model with 113,901 MDP states, which includes 119,897 binary variables, 455,604 real-valued variables, and 707,394 constraints in the MILP.
In most cases of ``uav'' and ``taskgraph'', a sound, optimally permissive multi-strategy is synthesized within one minute. 
However, the MILP solver failed to produce a feasible solution before time-out for some ``teamform'' cases, despite smaller MDP models than ``uav'' and ``taskgraph''. 
In addition, we observed that increasing the uncertainty level of preferences (i.e., larger intervals) leads to synthesized controllers with larger numbers of permissive states. 


\section{User Study}\label{sec:study}

We designed and conducted an online user study~\footnote{This user study has obtained the Institutional Review Board (IRB) approval.}
to evaluate the synthesized controllers.  
We describe the study design in \sectref{sec:design} and analyze the results in \sectref{sec:results}.

\subsection{Study Design} \label{sec:design}

\startpara{Participants} 
We recruited 100 individuals with a categorical age distribution of 6 (18-24); 58 (25-34); 28 (35-49); 6 (50-64); and 1 (65+) using Amazon Mechanical Turk (AMT). 
To ensure data quality, our study recruitment criteria required that participants must be able to read English fluently and had performed at least 50 tasks previously with an above 90\% approval rate on AMT. 
In addition, we injected attention check questions periodically during the study and rejected any response that failed attention checks.

\startpara{Procedure}
For each participant, we described the study purposes and asked them to consent to the study. After we asked about basic demographic information (e.g., age), the rest of the study consists of two phases: (i) eliciting human preferences, and (ii) evaluating the synthesized controllers.

First, we presented to each participant a grid map shown in \figref{fig:maze} and asked them to consider the planning problem for a robot to navigate from the start grid to the destination with three objectives: (1) minimizing the travel distance, (2) minimizing the risk encountered on route, and (3) maximizing the number of packages collected along the way. 
We used four different methods to elicit each participant's preferences over these objectives, including direct input of weight values (as illustrated in \figref{fig:elicit-direct}), Likert scaling (\figref{fig:elicit-scaling}), pairwise comparison of objective names (\figref{fig:elicit-name}), and pairwise comparison of optimal routes for individual objectives (\figref{fig:elicit-route}).
As described in \sectref{sec:preference}, we can derive a weight vector over objectives from the results of each preference elicitation method. Thus, by aggregating these four weight vectors resulting from different elicitation methods, we obtained an interval weight vector to represent each participant's preferences. 

\begin{figure}[tb]
    \centering
    \includegraphics[width=0.8\columnwidth]{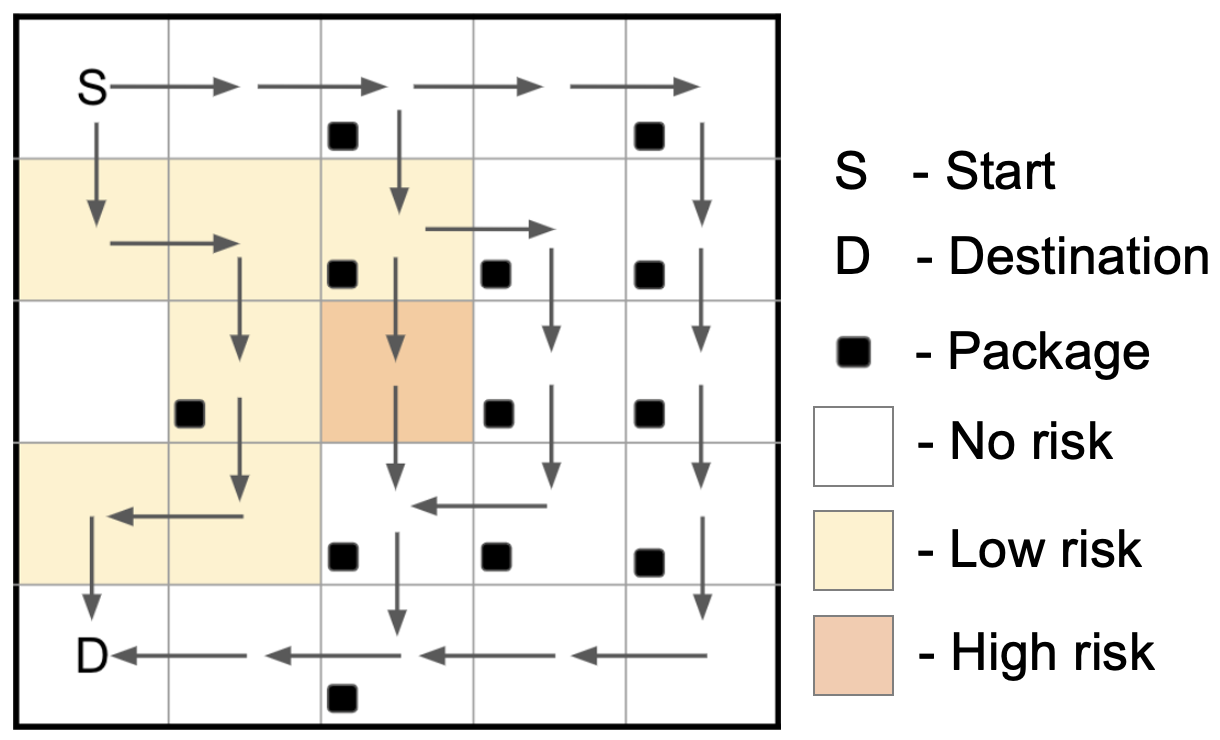}
    \caption{A grid map presented in the user study.}
    \label{fig:maze}
\end{figure}

\begin{figure}[tb]
    \centering
    \begin{subfigure}[b]{\columnwidth}
        \includegraphics[width=\columnwidth]{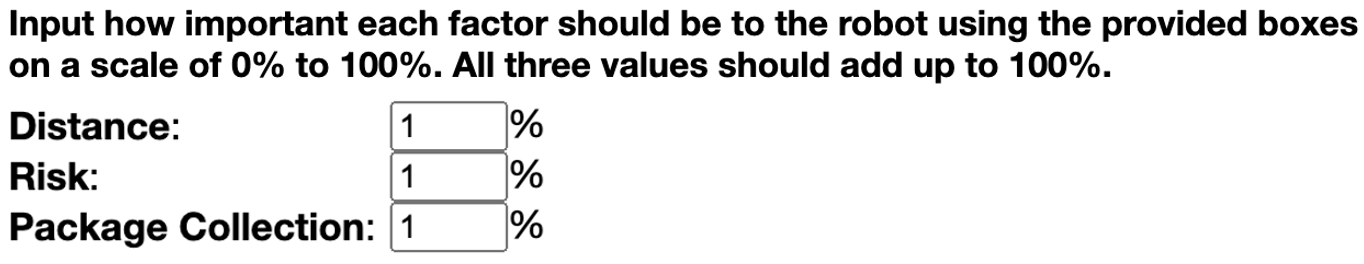}
        \caption{Direct input of weight values.}
        \label{fig:elicit-direct}
        \vspace{10pt}
    \end{subfigure}
    \begin{subfigure}[b]{\columnwidth}
        \includegraphics[width=\columnwidth]{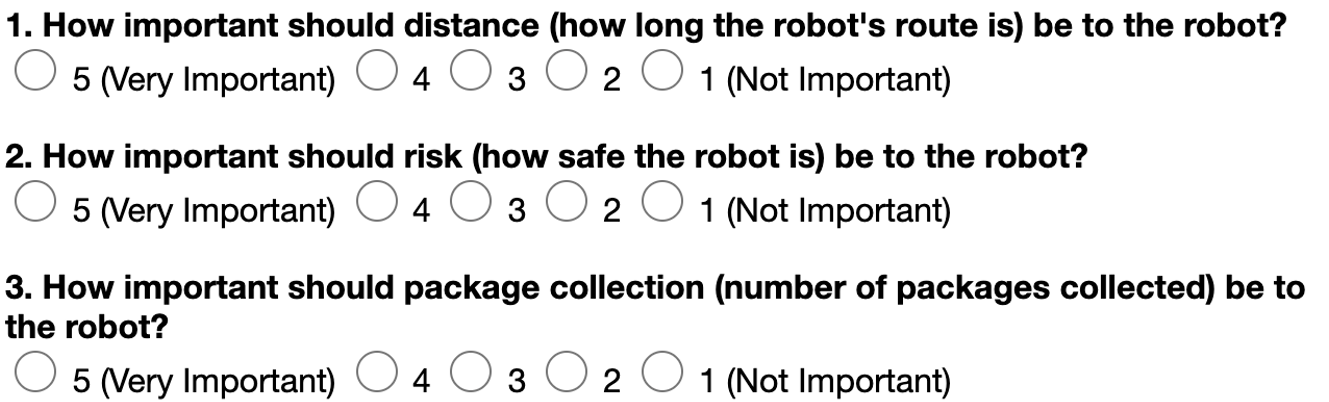}
        \caption{Likert scaling.}
        \label{fig:elicit-scaling}
        \vspace{10pt}
    \end{subfigure}
    \begin{subfigure}[b]{\columnwidth}
        \includegraphics[width=\columnwidth]{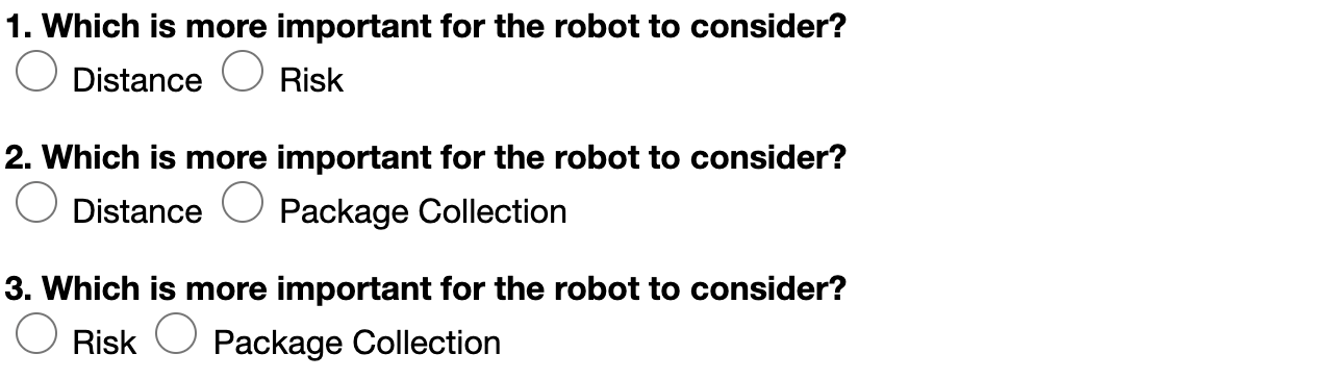}
        \caption{Pairwise comparison of objective names.}
        \label{fig:elicit-name}
        \vspace{10pt}
    \end{subfigure}
    \begin{subfigure}[b]{\columnwidth}
        \includegraphics[width=\columnwidth]{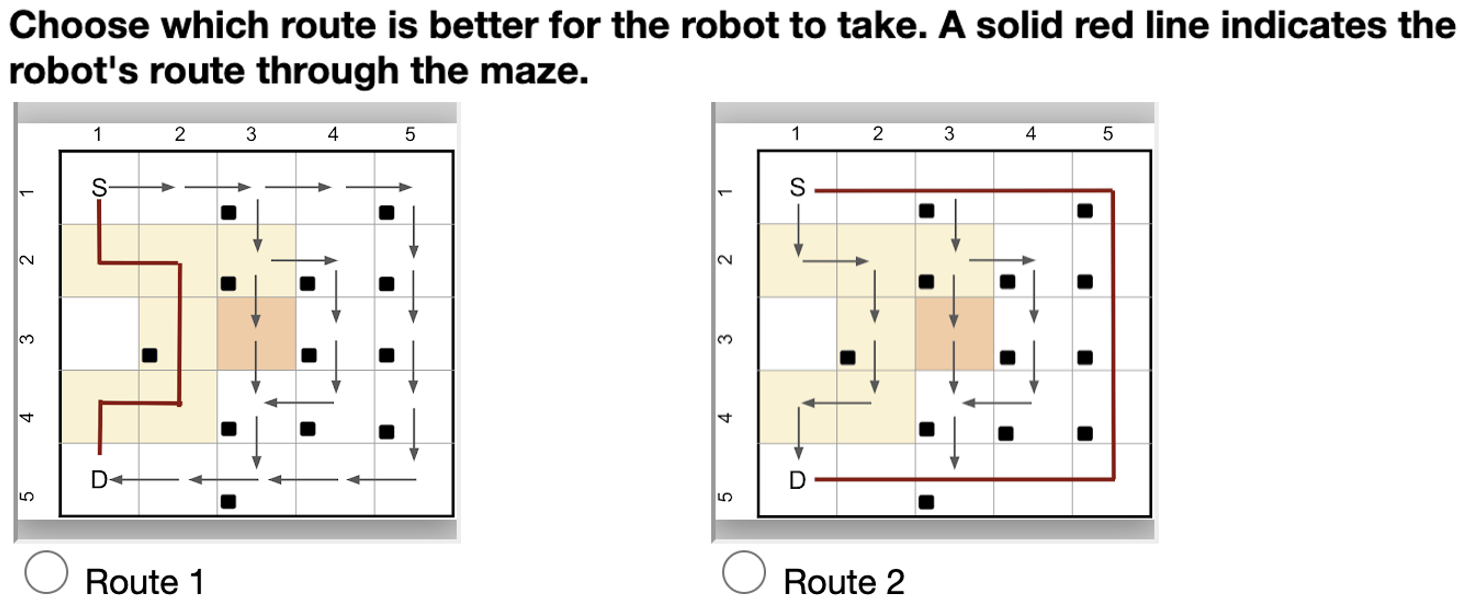}
        \caption{Pairwise comparison of routes that optimize individual objectives (e.g., route 1 for distance and route 2 for risk).}
        \label{fig:elicit-route}
    \end{subfigure}
    \caption{Four different methods for eliciting preferences.}
    \label{fig:elicit}
\end{figure}

Next, based on the elicited human preferences, we applied the proposed approach to synthesize optimal robotic controllers for three different grid maps (including \figref{fig:maze} and two other similar maps). 
We randomized the order of maps for different particiants to counterbalance the ordering confound effect. 
For each map, we asked participants a set of questions to evaluate the synthesized controllers. 
We describe the evaluation design, including manipulated factors, dependent measures, and hypotheses as follows.

\startpara{Manipulated factors and dependent measures} 
We performed a within-subject experiment in which all participants were exposed to all evaluation conditions. We manipulated two independent factors: preferences and permissivity for the controller synthesis.
For each map, we first presented a pair of MDP strategies (visualized as plans in the grid map) side by side:
one is a sound strategy synthesized based on the elicited preferences,
and the other is an arbitrary strategy unsound for preferences.  
\figref{fig:comp1} shows the list of evaluation questions. 
We asked participants about their satisfaction and perceived accuracy of each plan. 
We also asked them to choose which plan they preferred. 

Then, we presented side by side a strategy (visualized as a single route plan) and a multi-strategy (visualized as a possible multiple route plan), which are both synthesized based on the elicited preferences but with different degrees of permissivity. 
We asked participants to compare the synthesized strategy and multi-strategy in terms of 
favor (``Which route do you like better?''),
informativity (``Which route provides more information?''),
and satisfaction (``Which route are you more satisfied with?'').
The exact questionnaire can be found in \figref{fig:comp2}.

\startpara{Hypotheses}
We made the following hypotheses based on the two manipulated factors. 

Comparing strategies synthesized based on the elicited preferences and arbitrary strategies:
\begin{itemize}
    \item \textbf{H1:} Preference-based strategies are more favorable than unsound arbitrary strategies.
    \item \textbf{H2:} Preference-based strategies are perceived as more accurate than unsound arbitrary strategies.
    \item \textbf{H3:} Preference-based strategies yield better satisfaction than unsound arbitrary strategies.
\end{itemize}

Comparing strategies and multi-strategies synthesized based on the elicited preferences:
\begin{itemize}
    \item \textbf{H4:} Multi-strategies are more favorable than strategies.
    \item \textbf{H5:} Multi-strategies are perceived as more informative than strategies.
    \item \textbf{H6:} Multi-strategies yield better satisfaction than strategies.
\end{itemize}

\begin{figure}[t]
    \centering
    \includegraphics[width=0.85\columnwidth]{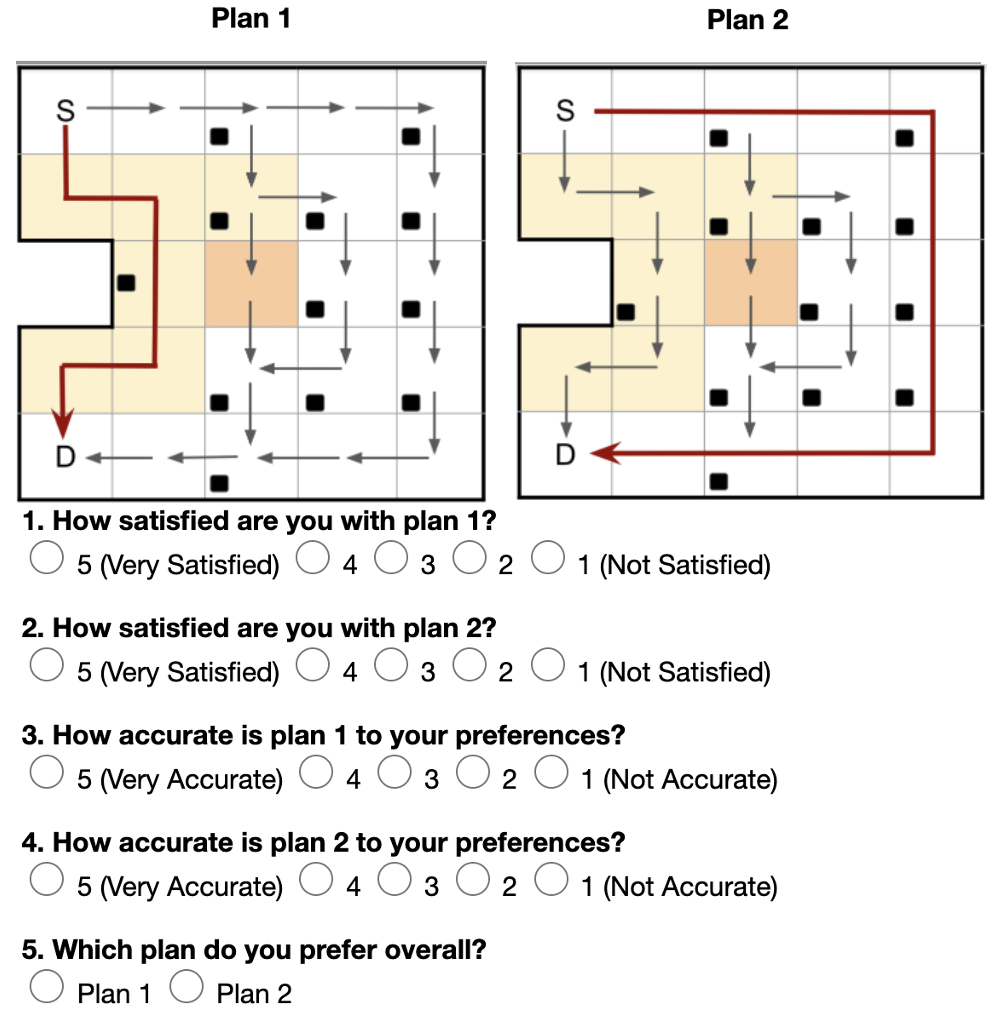}
    \caption{Evaluation of a synthesized strategy compared to an arbitrary strategy. Users were told these were possible robotic plans generated based on their input preferences, but not which plan was actually arbitrary.}
    \label{fig:comp1}
\end{figure}

\begin{figure}[t]
    \centering
    \includegraphics[width=0.8\columnwidth]{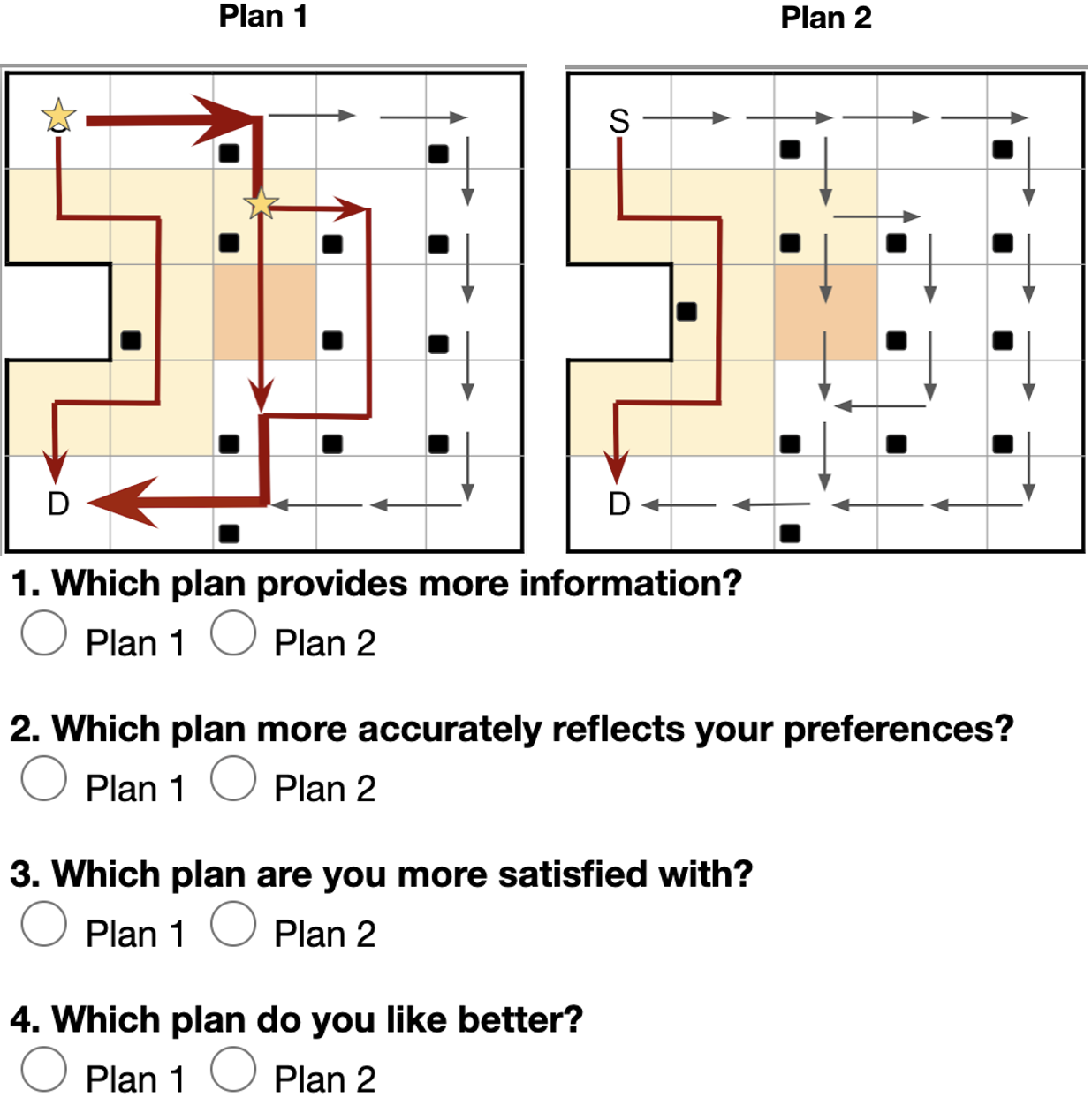}
    \caption{Evaluation comparison of a synthesized multi-strategy (plan 1) and a synthesized strategy (plan 2). Users were told these were possible robotic plans generated based on their input preferences. Stars indicate permissive states with multiple allowed actions.}
    \label{fig:comp2}
\end{figure}

\subsection{Results Analysis} \label{sec:results}

\begin{figure}[t]
    \centering
    \includegraphics[width=0.4\textwidth]{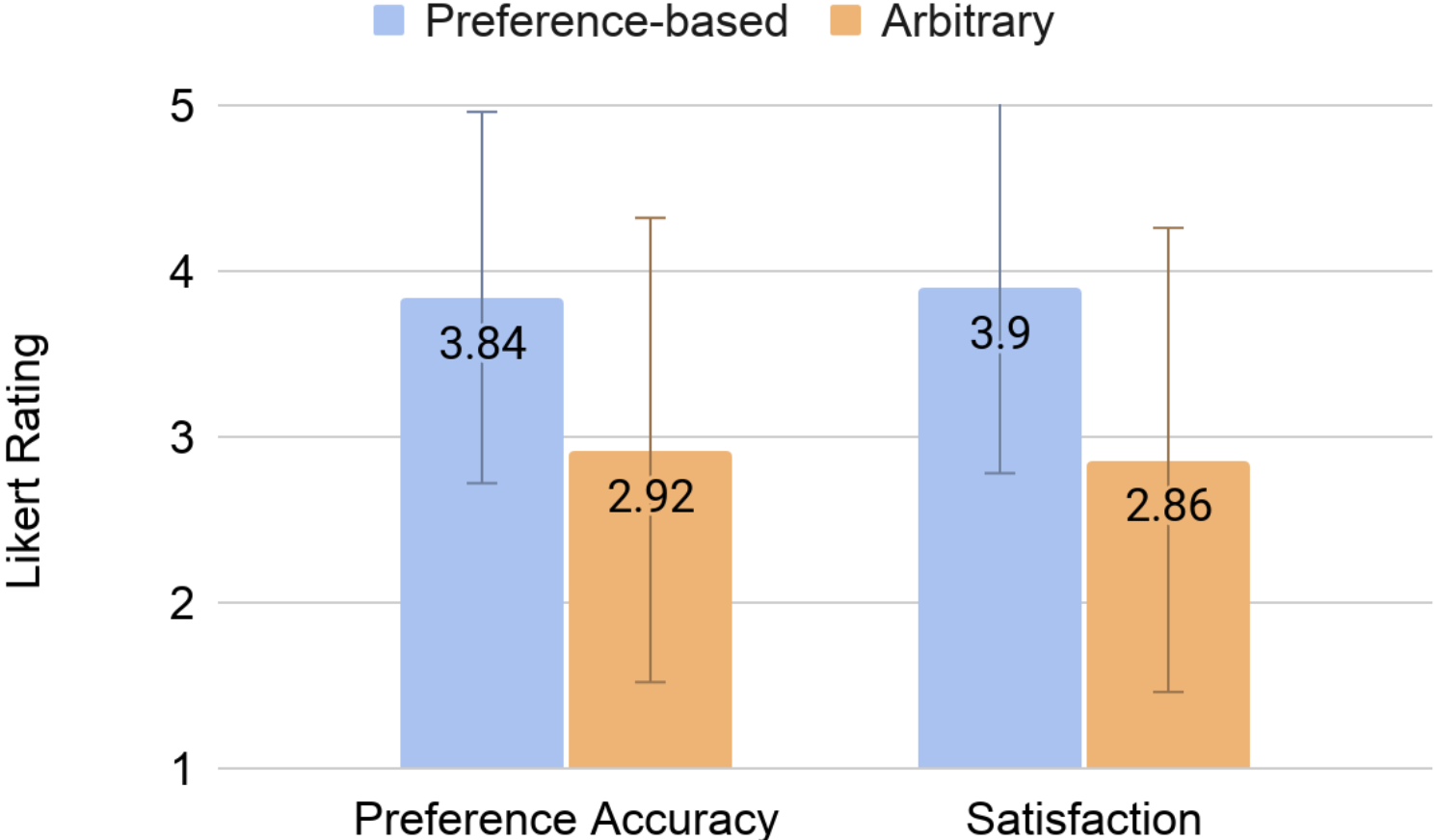}
    \caption{Mean and standard deviation of 5-point Likert ratings about perceived preference accuracy and user satisfaction for preference-based and arbitrary strategies.}
    \label{fig:comp1-results}
\end{figure}

\startpara{Comparing preference-based and arbitrary strategies} 
To evaluate hypothesis H1, we utilize a chi-squared test~\cite{seltman2012experimental} to prove the statistical significance in the frequency of strategy selection, assuming an expected frequency of 50/50 to represent a random selection of strategies by users. We use an alpha value of 0.05 and thus retain a confidence level of 95\% for our hypotheses. We assume a null hypothesis that the user selection of strategies will be random.
We find that users favor preference-based strategies about 63$\%$ of the time overall ($\chi^2$:$\alpha$= 0.05, $\chi^2$ = 21.33, CritVal = 3.84, p$\leq$0.00001, Significant.); 
they choose preference-based strategies over arbitrary strategies more often for all three maps (71\%, 59\%, 60\%).
\emph{Thus, the data supports H1.}

To evaluate hypotheses H2 and H3, shown in Figure \ref{fig:comp1-results} we employ one-way repeated measures ANOVA tests~\cite{seltman2012experimental} to prove the statistical significance of the mean of all responses between preference-based strategies and arbitrary strategies. We use an alpha value of 0.05 and assume a null hypothesis that users will perceive preference accuracy and be satisfied with both strategies at a similar rate.
We find that users rated preference-based strategies as significantly more accurate to their objective preferences (rANOVA:$\alpha$= 0.05, F(1,598) = 74.71, p$\leq$0.00001, Significant.). 
\figref{fig:comp1-results} also shows that users were significantly more satisfied with preference-based strategies than another arbitrary strategy through the plan (rANOVA:$\alpha$= 0.05, F(1,598) = 105.28, p$\leq$0.00001, Significant.). 
\emph{Thus, the data supports H2 and H3.}


\begin{figure}[t]
    \centering
    \includegraphics[width=0.4\textwidth]{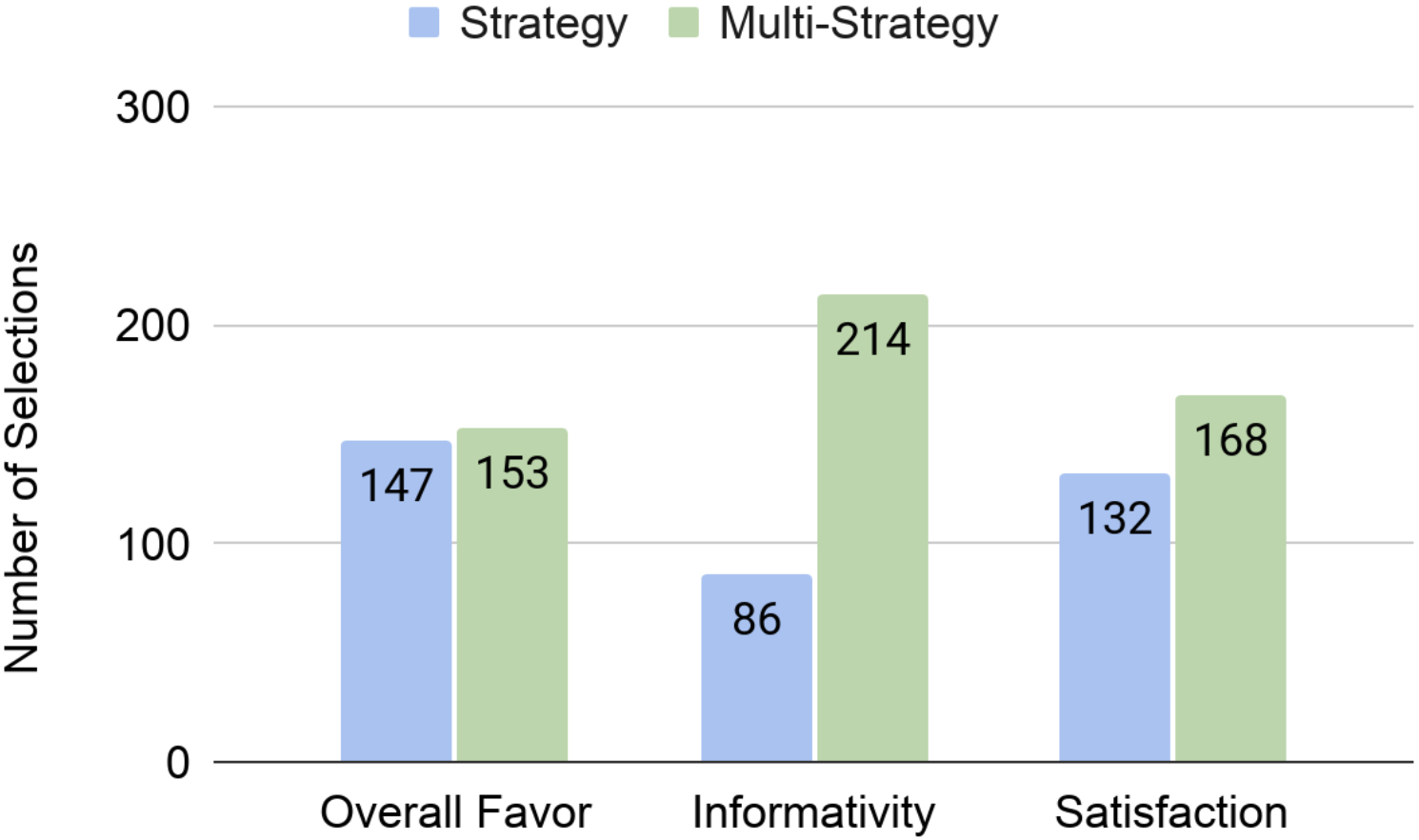}
    \caption{Pairwise comparison of the synthesized strategies and multi-strategies regarding overall favor, informativity, and satisfaction.}
    \label{fig:comp2-results}
\end{figure}

\startpara{Comparing strategies and multi-strategies}
We use chi-squared tests with an expected frequency of 50/50 and an alpha value of 0.05 to evaluate hypotheses H4, H5, and H6 with the study results shown in \figref{fig:comp2-results}.

Column 1 (Overall Favor) of \figref{fig:comp2-results} shows that users do not significantly favor multi-strategies over less permissive strategies ($\chi^2$:$\alpha$= 0.05, $\chi^2$ = 0.12, CritVal = 3.84, p$\leq$0.729, Not Significant.), only slightly favoring multi-strategies to a single strategy counterpart.
\emph{Thus, the data rejects H4.}

Column 2 (Informativity) of \figref{fig:comp2-results} shows users agreed about 71$\%$ of the time that multi-strategies provided them more information ($\chi^2$:$\alpha$= 0.05, $\chi^2$ = 54.61, CritVal = 3.84, p$\leq$0.00001, Significant.).  
\emph{Thus, the data supports H5.}

Column 3 (Satisfaction) of \figref{fig:comp2-results} shows users were more satisfied with multi-strategies 56$\%$ of the time ($\chi^2$:$\alpha$= 0.05, $\chi^2$ = 4.32, CritVal = 3.84, p$\leq$0.038, Significant.). 
\emph{Thus, the data supports H6.}

\vspace{10pt}
\startpara{Summary}
We accept all hypotheses except H4 based on the statistical analysis. 
The user study results show that it is beneficial to synthesize strategies that account for human preferences. 
In addition, multi-strategies are more informative and yield better user satisfaction. 
However, sometimes less is more, participants do not always favor multi-strategies over strategies that are simpler to understand. 


\section{Related Work}\label{sec:related}

\startpara{Human preferences}
Mathematical models of human preferences have been studied broadly in the field of social choice theory~\cite{arrow2012social}.
There are many different representations of human preferences, for example, encoded as reward functions for robot trajectory planning~\cite{sadigh2017active} and deep reinforcement learning~\cite{christiano2017deep},
or specified using temporal logics~\cite{li2015preference,mehdipour2020specifying}.
In the context of multi-objective optimization~\cite{marler2004survey}, preferences are represented as weights indicating the relative importance of objectives. 
Optimization methods can vary depending on when and how humans articulate their preferences.
Humans can indicate their preferences \emph{a priori} before running the optimization algorithm, they can progressively provide input during the optimization process, 
or they can select \emph{a posteriori} a solution point from a set of Pareto optimal results.
Our work considers \emph{a priori} elicitation of human preferences represented as weights for multiple objectives.


\startpara{Multi-objective controller synthesis for MDPs}
Multi-objective optimization has been well-studied in operation research and engineering~\cite{roy1981multicriteria,marler2004survey}.
In recent years, multi-objective optimization for MDPs has been considered from a formal methods perspective~\cite{chatterjee2006markov,etessami2007multi,forejt2011quantitative,forejt2012pareto,HJKQ20,DKQR20},
which presents theories and algorithms for verifying multi-objective properties, synthesizing strategies, and approximating Pareto curves.
More recently, such techniques have been applied to multi-objective robot path planning~\cite{lacerda2017multi} and multi-objective controller synthesis for autonomous systems that account for human operators' workload and fatigue levels~\cite{feng2015controller}. 
However, existing work does not account for the uncertainty of human preferences in the relative importance of objectives. 

There is a line of work (e.g., \cite{wolff2012robust,hahn2017multi,cubuktepe2020scenario}) considering uncertain MDPs where transition probabilities and rewards are represented as an uncertain set of parameters or intervals.
Our work is different in the sense that we consider the uncertainty in human preferences of different objectives.

Our proposed approach is based on mixed-integer linear programming (MILP). There exist several MILP-based solutions to compute counterexamples and witnesses for MDPs~\cite{wimmer2014minimal, feng2018counterexamples,feng2016human,funke2020farkas}. However, these methods are not directly applicable for controller synthesis which is a different problem. The most relevant work is~\cite{DFK+15} that presents an MILP-based method for permissive controller synthesis of probabilistic systems. As discussed in \sectref{sec:intro}, our approach is inspired by~\cite{DFK+15} but has several key differences (e.g., \cite{DFK+15} does not consider the controller soundness with respect to multi-objective properties and human preferences).

\section{Conclusion}\label{sec:conclusion}


In this paper, we developed a novel approach that accounts for uncertain human preferences in the multi-objective controller synthesis for MDPs.
The proposed MILP-based approach synthesizes a sound, optimally permissive multi-strategy with respect to a multi-objective property and an uncertain set of human preferences.
We implemented and evaluated the proposed approach on three large case studies. 
Experimental results show that our approach can be successfully applied to synthesize sound, optimally permissive multi-strategies with varying MDP model size and uncertainty level of human preferences.
In addition, we designed and conducted an online user study with 100 participants using Amazon Mechanical Turk, which shows statistically significant results about user satisfaction of the synthesized controllers.  

There are several directions to explore for possible future work. 
First, we will extend our approach for a richer set of multi-objective properties beyond expected total rewards, such as the temporal logic-based multi-objective properties considered in~\cite{forejt2011quantitative}.
Second, we will extend our approach for a variety of probabilistic models beyond MDPs, such as stochastic games and POMDPs. 
Last but not least, we will apply our approach to a wider range of real-world CPS applications (e.g., autonomous driving, smart cities).

\begin{acks}
This work was supported in part by National Science Foundation grants CCF-1942836 and CNS-1755784, and European Research Council under the European Union’s Horizon 2020 research and innovation programme (grant agreement No. 834115, FUN2MODEL).
Any opinions, findings, and conclusions or recommendations expressed in this material are those of the author(s) and do not necessarily reflect the views of the grant sponsors.
\end{acks}

\bibliographystyle{ACM-Reference-Format}
\bibliography{references}


\newpage

\appendix
\section{Proofs}\label{sec:proof}

Here, we prove the correctness of our MILP encoding, as stated in \thmref{thm:milp}. This result adapts and extends the proof for the MILP encoding given in~\cite{DFK+15}
(specifically the case for what are called \emph{static} penalty schemes and deterministic multi-strategies).
We require the following auxiliary lemma,
where $\Emss(r_i)$ denotes the expected total reward for reward structure $r_i$ under strategy $\sigma$ of MDP $\cM$, from a particular starting state $s$.

\vskip6pt
\begin{lemma}\label{lem:mdp}
Let $\cM=(S, s_0, A, \delta)$ be an MDP, $\phi=([r_1]_{\min},$
$\dots,[r_n]_{\min})$ be a multi-objective property,
and $\theta$ be a multi-strategy. Consider the inequalities for $s \in S$, $1\leq i\leq n$:
$$ \mu_{i,s} \le \min_{a \in \theta(s)}\sum_{t\in S} \delta(s,a)(t) \cdot \mu_{i,t} + r_i(s,a) $$
$$ \nu_{i,s} \ge \max_{a \in \theta(s)}\sum_{t\in S} \delta(s,a)(t) \cdot \mu_{i,t} + r_i(s,a) $$
Then values $\hat{\mu}_{i,s},\hat{\nu}_{i,s}\in\mathbb{R}$, for $s \in S$ and $1\leq i\leq n$,
are a solution to the above inequalities if an only if
$\hat{\mu}_{i,s} = \inf_{\sigma \lhd \theta} \Emss(r_i)$ and 
$\hat{\nu}_{i,s} = \sup_{\sigma \lhd \theta} \Emss(r_i)$.

\end{lemma}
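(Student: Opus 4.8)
The plan is to recognise $\hat{\mu}_{i,s}$ and $\hat{\nu}_{i,s}$ as the optimal value functions of the sub-MDP obtained from $\cM$ by restricting, in each state $s$, the enabled actions to the allowed set $\theta(s)$, and then to invoke the standard Bellman characterisation of those value functions. For a fixed objective $i$, minimising $r_i$ over the compliant strategies $\sigma \lhd \theta$ is exactly an expected-total-reward (stochastic shortest path) problem on this sub-MDP: by the end-state assumption every compliant strategy reaches the absorbing, zero-reward end states with probability $1$, so every $\Emss(r_i)$ is finite and, as noted in the paper's footnote, a deterministic memoryless optimiser always exists. I would handle the $\mu$ (minimisation) and $\nu$ (maximisation) cases symmetrically, so it suffices to describe the $\mu$ case.

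For the ($\Leftarrow$) direction I would set $\hat{\mu}_{i,s} = \inf_{\sigma \lhd \theta}\Emss(r_i)$ and argue that this function is a fixed point of the min-Bellman operator
\begin{equation*}
(T\mu)_s \;=\; \min_{a\in\theta(s)}\Big(\textstyle\sum_{t\in S}\delta(s,a)(t)\cdot\mu_t + r_i(s,a)\Big),
\end{equation*}
i.e. $\hat{\mu} = T\hat{\mu}$, which is the classical optimality equation for expected-reward MDPs under the proper-policy (end-state) hypothesis. Since $\hat{\mu}$ satisfies this equation with equality, it satisfies the stated inequality $\hat{\mu}_{i,s}\le (T\hat{\mu})_s$ at once; the analogous argument with the max-Bellman operator handles $\hat{\nu}$.

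The ($\Rightarrow$) direction is the main obstacle, since the one-sided inequality $\hat{\mu}\le T\hat{\mu}$ does not by itself pin down a unique value — it is met by the whole family of pointwise sub-solutions. My plan is to exploit monotonicity of $T$: from $\hat{\mu}\le T\hat{\mu}$ one gets $\hat{\mu}\le T\hat{\mu}\le T^2\hat{\mu}\le\cdots$, and the proper-policy assumption makes $T$ a contraction (in a suitable weighted sup-norm), so the iterates $T^k\hat{\mu}$ converge to the unique fixed point $\inf_{\sigma\lhd\theta}\Emss(r_i)$. Hence every solution of the inequality is bounded above by this value, which is itself the pointwise-greatest solution; the extremal (greatest-$\mu$, least-$\nu$) solution therefore coincides with the inf/sup values. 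I would then observe that this is exactly the solution selected by objective~(\ref{eq:obj}), which maximises $\mu_{i,s_0}$ and minimises $\nu_{i,s_0}$ as a tie-breaker, so that the value tracked by the MILP is precisely $\hat{\mu}_{i,s_0}=\inf_{\sigma\lhd\theta}\Emss(r_i)$ and $\hat{\nu}_{i,s_0}=\sup_{\sigma\lhd\theta}\Emss(r_i)$.

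The delicate points I expect to spend the most care on are (i) justifying the unique-fixed-point / contraction property in the presence of the possibly negative rewards that arise from negated maximisation objectives, which I would handle via the fact that under any compliant strategy the horizon to absorption is almost surely finite, making the total reward a genuinely convergent, finite-expectation sum; and (ii) making precise that the infimum over \emph{all} compliant strategies equals the value of the best deterministic memoryless compliant strategy, which reduces to the existence result already cited in the paper. Once these are in place, both implications follow from monotone value iteration together with the Bellman optimality equation.
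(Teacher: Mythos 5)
Your proposal is correct and is, at bottom, the same argument the paper relies on: the paper's own proof of \lemref{lem:mdp} is a one-line appeal to standard results on expected total reward in MDPs with almost-surely-reached, zero-reward end states, and your Bellman-operator / monotone value-iteration development is precisely the content of those standard results, spelled out (with properness of all compliant policies, hence contraction, supplied by the end-state assumption and by $\theta(s)\neq\emptyset$ on reachable states). The one place where you genuinely depart from --- and improve on --- the paper is the ($\Rightarrow$) direction. You correctly observe that the lemma as literally stated is too strong: the one-sided inequalities $\mu\le T\mu$ and $\nu\ge T\nu$ admit a whole family of sub-/super-solutions (e.g.\ uniformly lowering $\mu$ at non-end states preserves feasibility), so an arbitrary solution need not equal $\inf_{\sigma\lhd\theta}\Emss(r_i)$; what is true is that the value functions are the pointwise-greatest (resp.\ least) solutions. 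Your repair --- that the MILP objective, by pushing $\mu_{i,s_0}$ up and $\nu_{i,s_0}$ down, selects exactly the extremal solution at the initial state, which is all that constraints (\ref{eq:c5})--(\ref{eq:c6}) and the soundness argument in \thmref{thm:milp} actually need --- is the right one, and it patches a step the paper's proof silently elides. The only caveat is that this argument pins down the extremal values at $s_0$ only, not at every state $s$ as the lemma claims; that weaker conclusion suffices for \thmref{thm:milp}, but it means your proof (correctly) establishes a slightly different statement than the one written. Also note the second displayed inequality in the lemma should read $\nu_{i,t}$ rather than $\mu_{i,t}$ on the right-hand side, as your symmetric treatment implicitly assumes.
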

\begin{proof}
The above follows from standard results on the solution of MDPs~\cite{puterman1994markov},
noting that there is a separate set of inequalities for each $\mu_i$ and $\nu_i$ and $1\leq i\leq n$.
This also relies on our assumption (see Section~\ref{sec:problem}) that
a designated set of zero-reward end states is always reached with probability 1,
ensuring that expected total rewards are finite and removing the need to deal with zero-reward loops
(whereas \cite{DFK+15} deals with the latter through
additional MILP variables and constraints).
%
\end{proof}

\vskip6pt
\correctness*
 
\begin{proof}
We prove that:
(1) every multi-strategy $\theta$ induces a satisfying assignment to the MILP
such that the permissive penalty $\lambda(\theta)=\sum_{s\in S} \sum_{a\in \alpha(s)} (1-\eta_{s,a})$, 
and (2) vice versa. 

\startpara{Direction (1)}
We start by proving that, given a sound multi-strategy $\theta$, 
we can construct a satisfying assignment \\
$\{\hat{\eta}_{s,a}, \hat{\mu}_{i,s}, \hat{\nu}_{i,s}\}_{s \in S, a \in A, 1\leq i\leq n}$
to the MILP constraints.
For $s\in S$ and $a \in \alpha(s)$, we set $\hat{\eta}_{s,a}=1$ if 
$s$ is a reachable state under $\theta$ and $a \in \theta(s)$;
otherwise, we set $\hat{\eta}_{s,a}=0$.

Thus, the permissive penalty $\lambda(\theta)$ that counts the total number of disallowed actions in reachable states under $\theta$ equals to $\sum_{s\in S} \sum_{a\in \alpha(s)} (1-\hat{\eta}_{s,a})$.
Constraints (\ref{eq:c1}) and (\ref{eq:c2}) are satisfied for all unreachable states, because both sides of the inequalities are zero. 
For reachable states, constraint (\ref{eq:c1}) is trivially satisfied if the scaling factor $c$ is large enough;
constraint (\ref{eq:c2}) is also satisfied, because a reachable state under strategy $\theta$ should have at least one allowed action.

We set $\hat{\mu}_{i,s} {=} \inf_{\sigma \lhd \theta} \Emss(r_i)$
and $\hat{\nu}_{i,s} {=} \sup_{\sigma \lhd \theta} \Emss(r_i)$.
Constraints (\ref{eq:c3}) and (\ref{eq:c4}) are satisfied 
for $a \in \theta(s)$ thanks to \lemref{lem:mdp}.
If $a \not \in \theta(s)$, then (\ref{eq:c3}) and (\ref{eq:c4}) are also trivially satisfied because $\hat{\eta}_{s,a}=0$. 
By the soundness definition, we have 
$\hat{\mu}_{i,s_0} \ge \underline{b}_i$ and $\hat{\nu}_{i,s_0} \le \overline{b}_i$.
This gives the satisfaction of constraints (\ref{eq:c5}) and (\ref{eq:c6}). 

\startpara{Direction (2)}
Given a satisfying MILP assignment\\
$\{\hat{\eta}_{s,a}, \hat{\mu}_{i,s}, \hat{\nu}_{i,s}\}_{s \in S, a \in A, 1\leq i\leq n}$,
we construct $\theta$ for $\cM$ by putting $\theta(s)=\{a \in \alpha(s) | \ \hat{\eta}_{s,a}=1\}$ for all $s \in S$.
Thanks to constraints (\ref{eq:c3}) and (\ref{eq:c4}), and \lemref{lem:mdp},
we have that 
$\hat{\mu}_{i,s} {=} \inf_{\sigma \lhd \theta} \Emss(r_i)$
and $\hat{\nu}_{i,s} {=} \sup_{\sigma \lhd \theta} \Emss(r_i)$
for each $1\leq i \leq n$.
Using also constraints (\ref{eq:c5}) and (\ref{eq:c6}),
we have that, for any strategy $\sigma\lhd\theta$,
$\Emsi(r_i) \geq \hat{\mu}_{i,s_0} \geq \underline{b}_i$
and $\Emsi(r_i) \leq \hat{\nu}_{i,s_0} \leq \overline{b}_i$;
thus the multi-strategy $\theta$ is sound with respect to $\phi$ and $\vb*{\tilde{w}}$.
As in the reverse direction above,
the permissiveness of $\theta$ is 
$\lambda(\theta)=\sum_{s\in S} \sum_{a\in \alpha(s)} (1-\hat{\eta}_{s,a})$,
taken from the objective function of the MILP.
\end{proof}

\end{document}